\newcommand{\figleft}{{\em (Left)}}
\newcommand{\figcenter}{{\em (Center)}}
\newcommand{\figright}{{\em (Right)}}
\def\eqref#1{equation~\ref{#1}}
\def\1{\bm{1}}
\DeclareMathAlphabet{\mathsfit}{\encodingdefault}{\sfdefault}{m}{sl}
\SetMathAlphabet{\mathsfit}{bold}{\encodingdefault}{\sfdefault}{bx}{n}
\newcommand{\E}{\mathbb{E}}
\DeclareMathOperator*{\argmax}{arg\,max}
\DeclareMathOperator*{\argmin}{arg\,min}
\newtheorem{theorem}{Theorem}[section]
\newtheorem{lemma}[theorem]{Lemma}
\newcommand{\h}{\mathcal{H}}
\newtheoremstyle{questionstyle}
  {\topsep}   % ABOVESPACE
  {0}         % BELOWSPACE
  {\itshape}  % BODYFONT
  {0pt}       % INDENT (empty value is the same as 0pt)
  {\bfseries} % HEADFONT
  {.}         % HEADPUNCT
  {5pt plus 1pt minus 1pt} % HEADSPACE
  {}          % CUSTOM-HEAD-SPEC
\theoremstyle{questionstyle}\newtheorem{question}{Question}
\title{Diversity is All You Need:\\Learning Skills without a Reward Function}
\author{
  Benjamin Eysenbach\thanks{Work done as a member of the Google AI Residency Program (\url{g.co/airesidency}).} \\
  Carnegie Mellon University\\
  \texttt{beysenba@cs.cmu.edu} \\
  \And
  Abhishek Gupta \\
  UC Berkeley\\
  \And
  Julian Ibarz \\
  Google Brain \\
  \And
  Sergey Levine \\
  UC Berkeley \\
  Google Brain \\
}
\begin{document}

\maketitle

\begin{abstract}
Intelligent creatures can explore their environments and learn useful skills without supervision.
In this paper, we propose ``Diversity is All You Need''(DIAYN), a method for learning useful skills without a reward function. Our proposed method learns skills by maximizing an information theoretic objective using a maximum entropy policy. On a variety of simulated robotic tasks, we show that this simple objective results in the unsupervised emergence of diverse skills, such as walking and jumping. In a number of reinforcement learning benchmark environments, our method is able to learn a skill that solves the benchmark task despite never receiving the true task reward.
We show how pretrained skills can provide a good parameter initialization for downstream tasks, and can be composed hierarchically to solve complex, sparse reward tasks. Our results suggest that unsupervised discovery of skills can serve as an effective pretraining mechanism for overcoming challenges of exploration and data efficiency in reinforcement learning.
\end{abstract}

\section{Introduction}
\label{sec:introduction}

Deep reinforcement learning (RL) has been demonstrated to effectively learn a wide range of reward-driven skills, including playing games~\citep{mnih2013playing,silver2016mastering}, controlling robots~\citep{gu2017deep,schulman2015high}, and navigating complex environments~\citep{ai2thor,mirowski2016learning}. However, intelligent creatures can explore their environments and learn useful skills even without supervision, so that when they are later faced with specific goals, they can use those skills to satisfy the new goals quickly and efficiently.

Learning skills without reward has several practical applications.
Environments with sparse rewards effectively have no reward until the agent randomly reaches a goal state. Learning useful skills without supervision may help address challenges in exploration in these environments.
For long horizon tasks, skills discovered without reward can serve as primitives for hierarchical RL, effectively shortening the episode length.
In many practical settings, interacting with the environment is essentially free, but evaluating the reward requires human feedback~\citep{christiano2017deep}. Unsupervised learning of skills may reduce the amount of supervision necessary to learn a task.
While we can take the human out of the loop by designing a reward function, it is challenging to design a reward function that elicits the desired behaviors from the agent~\citep{hadfield2017inverse}.
Finally, when given an unfamiliar environment, it is challenging to determine what tasks an agent should be able to learn. Unsupervised skill discovery partially answers this question.\footnote{See videos here: \small{\url{https://sites.google.com/view/diayn/}}}

Autonomous acquisition of useful skills without any reward signal is an exceedingly challenging problem.
A \emph{skill} is a latent-conditioned policy that alters that state of the environment in a consistent way. We consider the setting where the reward function is unknown, so we want to learn a set of skills by maximizing the utility of this set.
Making progress on this problem requires specifying a learning objective that ensures that each skill individually is distinct and that the skills collectively explore large parts of the state space.
In this paper, we show how a simple objective based on mutual information can enable RL agents to autonomously discover such skills. These skills are useful for a number of applications, including hierarchical reinforcement learning and imitation learning.

We propose a method for learning diverse skills with deep RL in the absence of any rewards.
We hypothesize that  in order to acquire skills that are useful, we must train the skills so that they maximize coverage over the set of possible behaviors. While one skill might perform a useless behavior like random dithering, other skills should perform behaviors that are distinguishable from random dithering, and therefore more useful.
A key idea in our work is to use discriminability between skills as an objective.
Further, skills that are distinguishable are not necessarily maximally diverse -- a slight difference in states makes two skills distinguishable, but not necessarily diverse in a semantically meaningful way.
To combat problem, we want to learn skills that not only are distinguishable, but also are \emph{as diverse as possible.} By learning distinguishable skills that are as random as possible, we can ``push'' the skills away from each other, making each skill robust to perturbations and effectively exploring the environment.
By maximizing this objective, we can learn skills that run forward, do backflips, skip backwards, and perform face flops (see Figure~\ref{fig:eye-candy}).

Our paper makes five contributions.
First, we propose a method for learning useful skills without any rewards. We formalize our discriminability goal as maximizing an information theoretic objective with a maximum entropy policy.
Second, we show that this simple exploration objective results in the unsupervised emergence of diverse skills, such as running and jumping, on several simulated robotic tasks. In a number of RL benchmark environments, our method is able to solve the benchmark task despite never receiving the true task reward. In these environments, some of the learned skills correspond to solving the task, and each skill that solves the task does so in a distinct manner.
Third, we propose a simple method for using learned skills for hierarchical RL and find this methods solves challenging tasks.
Four, we demonstrate how skills discovered can be quickly adapted to solve a new task.
Finally, we show how skills discovered can be used for imitation learning.

\section{Related Work}

Previous work on hierarchical RL has learned skills to maximize a single, known, reward function by jointly learning a set of skills and a meta-controller (e.g., \citep{bacon2017option, heess2016learning, dayan1993feudal, frans2017meta, krishnan2017ddco, florensa2017stochastic}). One problem with joint training (also noted by~\citet{shazeer2017outrageously}) is that the meta-policy does not select ``bad'' options, so these options do not receive any reward signal to improve. Our work prevents this degeneracy by using a random meta-policy during unsupervised skill-learning, such that neither the skills nor the meta-policy are aiming to solve any single task.
A second importance difference is that our approach learns skills \emph{with no reward}. Eschewing a reward function not only avoids the difficult problem of reward design, but also allows our method to learn task-agnostic.

Related work has also examined connections between RL and information theory~\citep{ziebart2008maximum,schulman2017equivalence,nachum2017bridging, haarnoja2017reinforcement} and developed maximum entropy algorithms with these ideas~\cite{haarnoja2018soft, haarnoja2017reinforcement}.
Recent work has also applied tools from information theory to skill discovery.
\citet{mohamed2015variational} and \citet{jung2011empowerment} use the mutual information between states and actions as a notion of empowerment for an intrinsically motivated agent. Our method maximizes the mutual information between states and \emph{skills}, which can be interpreted as maximizing the empowerment of a \emph{hierarchical agent} whose action space is the set of skills.
 \citet{hausman2018learning}, \citet{florensa2017stochastic}, and \citet{gregor2016variational} showed that a discriminability objective is equivalent to maximizing the mutual information between the latent skill $z$ and some aspect of the corresponding trajectory.
\citet{hausman2018learning} considered the setting with many tasks and reward functions and \citet{florensa2017stochastic} considered the setting with a single task reward.
Three important distinctions allow us to apply our method to tasks significantly more complex than the gridworlds in~\citet{gregor2016variational}.
First, we use maximum entropy policies to force our skills to be diverse. Our theoretical analysis shows that including entropy maximization in the RL objective results in the mixture of skills being maximum entropy in aggregate.
Second, we fix the prior distribution over skills, rather than learning it. Doing so prevents our method from collapsing to sampling only a handful of skills.
Third, while the discriminator in~\citet{gregor2016variational} only looks at the final state, our discriminator looks at every state, which provides additional reward signal.
These three crucial differences help explain how our method learns useful skills in complex environments.

Prior work in neuroevolution and evolutionary algorithms has studied how complex behaviors can be learned by directly maximizing diversity~\citep{lehman2011abandoning,lehman2011evolving, woolley2011deleterious, stanley2002evolving, such2017deep, pugh2016quality, mouret2009overcoming}.
While this prior work uses diversity maximization to obtain better solutions, we aim to acquire complex skills with minimal supervision to improve efficiency (i.e., reduce the number of objective function queries) and as a stepping stone for imitation learning and hierarchical RL.
We focus on deriving a general, information-theoretic objective that does not require manual design of distance metrics and can be applied to any RL task without additional engineering.

Previous work has studied intrinsic motivation in humans and learned agents.~\cite{ryan2000intrinsic, bellemare2016unifying, fu2017ex2, schmidhuber2010formal, oudeyer2007intrinsic, pathak2017curiosity, baranes2013active}.
While these previous works use an intrinsic motivation objective to learn a \emph{single} policy, we propose an objective for learning \emph{many}, diverse policies.
Concurrent work~\cite{valor} draws ties between learning discriminable skills and variational autoencoders. We show that our method scales to more complex tasks, likely because of algorithmic design choices, such as our use of an off-policy RL algorithm and conditioning the discriminator on individual states.

\section{Diversity is All You Need}
\label{sec:usd-theory}

\begin{figure}[t]
    \centering
    \begin{minipage}{0.45\linewidth}
        \includegraphics[width=\linewidth]{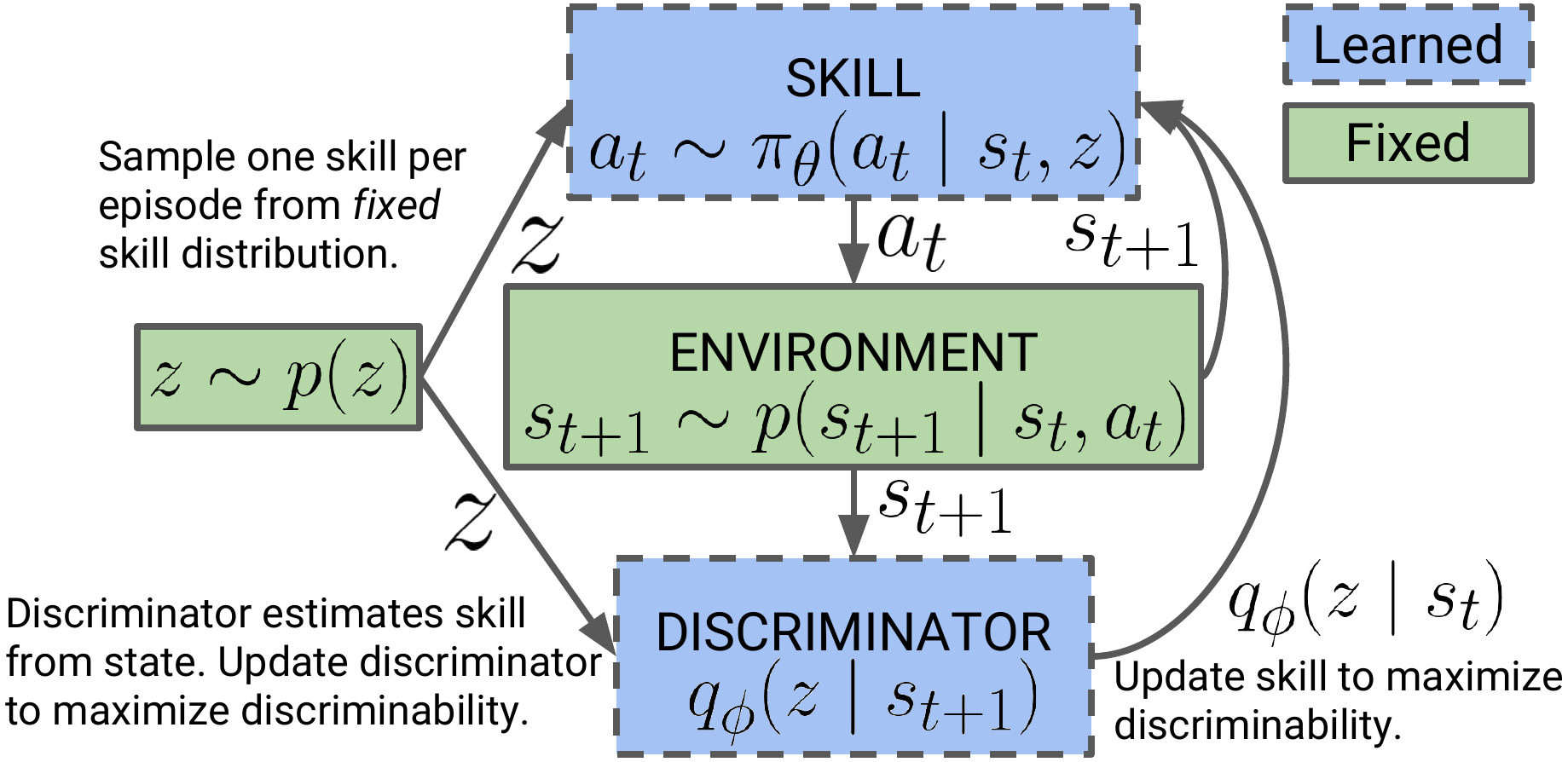}
    \end{minipage} \hfill
    \begin{minipage}{0.54\linewidth}
    
    \begin{algorithm}[H]
    \small
    \DontPrintSemicolon
    \SetAlgoLined
    \While{not converged}{
        Sample skill $z \sim p(z)$ and initial state $s_0 \sim p_0(s)$\;
        \For{$t \leftarrow 1$ \KwTo $steps\_per\_episode$}{
            Sample action $a_t \sim \pi_{\theta}(a_t \mid s_t, z)$ from skill.\;
            Step environment: $s_{t+1} \sim p(s_{t+1} \mid s_t, a_t)$.\;
            Compute $q_{\phi}(z \mid s_{t+1})$ with discriminator.\;
            Set skill reward $r_t = \log q_{\phi}(z \mid s_{t+1}) - \log p(z)$\;
            Update policy ($\theta$) to maximize $r_t$ with SAC.\;
            Update discriminator ($\phi$) with SGD.\;
        }
    } 
    \caption{DIAYN}
    \end{algorithm}
    \end{minipage}
    \caption{\textbf{DIAYN Algorithm}:
    We update the discriminator to better predict the skill, and update the skill to visit diverse states that make it more discriminable. \label{fig:model}}
    \vspace{-1em}
\end{figure}

We consider an unsupervised RL paradigm in this work, where the agent is allowed an unsupervised ``exploration'' stage followed by a supervised stage.
In our work, the aim of the unsupervised stage is to learn skills that eventually will make it easier to maximize the task reward in the supervised stage. Conveniently, because skills are learned without a priori knowledge of the task, the learned skills can be used for many different tasks.

\subsection{How it Works}

Our method for unsupervised skill discovery, DIAYN (``Diversity is All You Need''),
builds off of three ideas.
First, for skills to be useful, we want the skill to dictate the states that the agent visits. Different skills should visit different states, and hence be distinguishable.
Second, we want to use states, not actions, to distinguish skills, because actions that do not affect the environment are not visible to an outside observer. For example, an outside observer cannot tell how much force a robotic arm applies when grasping a cup if the cup does not move.
Finally, we encourage exploration and incentivize the skills to be as diverse as possible by learning skills that act as randomly as possible. Skills with high entropy that remain discriminable must explore a part of the state space far away from other skills, lest the randomness in its actions lead it to states where it cannot be distinguished.

We construct our objective using notation from information theory: $S$ and $A$ are random variables for states and actions, respectively; $Z \sim p(z)$ is a latent variable, on which we condition our policy; we refer to a the policy conditioned on a fixed $Z$ as a ``skill''; $I(\cdot ; \cdot)$ and $\h[\cdot]$ refer to mutual information and Shannon entropy, both computed with base $e$. In our objective, we maximize the mutual information between skills and states, $I(A; Z)$, to encode the idea that the skill should control which states the agent visits. Conveniently, this mutual information dictates that we can infer the skill from the states visited.
To ensure that states, not actions, are used to distinguish skills, we minimize the mutual information between skills and  actions given the state, $I(A; Z \mid S)$.
Viewing all skills together with $p(z)$ as a mixture of policies, we maximize the entropy $\h[A \mid S]$ of this mixture policy.
In summary, we maximize
\begin{align}
\mathcal{F}(\theta) &\triangleq I(S; Z) + \h[A \mid S] - I(A; Z \mid S) \label{eq:objective} \\
                    &= (\h[Z] - \h[Z \mid S]) + \h[A \mid S] - (\h[A \mid S] - \h[A \mid S, Z]) \nonumber \\
                    &= \h[Z] - \h[Z \mid S] + \h[A \mid S, Z] \label{eq:objective-2}
\end{align}
We rearranged our objective in Equation~\ref{eq:objective-2} to give intuition on how we optimize it.\footnote{While our method uses stochastic policies, note that for deterministic policies in continuous action spaces, \mbox{$I(A; Z \mid S) = \h[A \mid S]$}. Thus, for deterministic policies, Equation~\ref{eq:objective-2} reduces to maximizing $I(S; Z)$.}
The first term encourages our prior distribution over $p(z)$ to have high entropy. We fix $p(z)$ to be uniform in our approach, guaranteeing that is has maximum entropy.
The second term suggests that it should be easy to infer the skill $z$ from the current state.
The third term suggests that each skill should act as randomly as possible, which we achieve by using a maximum entropy policy to represent each skill.
As we cannot integrate over all states and skills to compute $p(z \mid s)$ exactly, we approximate this posterior with a learned discriminator $q_{\phi}(z \mid s)$.
Jensen's Inequality tells us that replacing $p(z \mid s)$ with $q_{\phi}(z \mid s)$ gives us a variational lower bound $\mathcal{G}(\theta, \phi)$ on our objective $\mathcal{F}(\theta)$ (see~\citep{agakov2004algorithm} for a detailed derivation):
\begin{align*}
    \mathcal{F}(\theta) &= \h[A \mid S, Z] - \h[Z \mid S] + \h[Z] \\
                        &= \h[A \mid S, Z] + \E_{z \sim p(z), s \sim \pi(z)}[\log p(z \mid s)] - \E_{z \sim p(z)}[\log p(z)] \\
                        &\ge \h[A \mid S, Z] + \E_{z \sim p(z), s \sim \pi(z)}[\log q_{\phi}(z \mid s) - \log p(z)] \triangleq \mathcal{G}(\theta, \phi)
\end{align*}

\subsection{Implementation}
\label{sec:implementation}

We implement DIAYN with soft actor critic~\citep{haarnoja2018soft}, learning a policy $\pi_{\theta}(a \mid s, z)$ that is conditioned on the latent variable $z$.
Soft actor critic maximizes the policy's entropy over actions, which takes care of the entropy term in our objective $\mathcal{G}$. Following~\citet{haarnoja2018soft}, we scale the entropy regularizer $\h[a \mid s, z]$ by $\alpha$. We found empirically that an $\alpha = 0.1$ provided a good trade-off between exploration and discriminability. We maximize the expectation in $\mathcal{G}$ by replacing the task reward with the following pseudo-reward:
\begin{equation}
    r_z(s, a) \triangleq \log q_{\phi}(z \mid s) - \log p(z)
    \label{eq:reward}
\end{equation}
We use a categorical distribution for $p(z)$. During unsupervised learning, we sample a skill $z \sim p(z)$ at the start of each episode, and act according to that skill throughout the episode. The agent is rewarded for visiting states that are easy to discriminate, while the discriminator is updated to better infer the skill $z$ from states visited. Entropy regularization occurs as part of the SAC update.

\subsection{Stability}
Unlike prior adversarial unsupervised RL methods (e.g.,~\cite{sukhbaatar2017intrinsic}), DIAYN forms a cooperative game, which avoids many of the instabilities of adversarial saddle-point formulations. 
On gridworlds, we can compute analytically that the unique optimum to the DIAYN optimization problem is to evenly partition the states between skills, with each skill assuming a uniform stationary distribution over its partition (proof in Appendix~\ref{appendix:proof}).
In the continuous and approximate setting, convergence guarantees would be desirable, but this is a very tall order: even standard RL methods with function approximation (e.g., DQN) lack convergence guarantees, yet such techniques are still useful. Empirically, we find DIAYN to be robust to random seed; varying the random seed does not noticeably affect the skills learned, and has little effect on downstream tasks (see Fig.s~\ref{fig:cheetah-entropy}, \ref{fig:hrl-point}, and \ref{fig:classic-control-hist-seeds}).

\section{Experiments}
\label{sec:usd-eval}

In this section, we evaluate DIAYN and compare to prior work. First, we analyze the skills themselves, providing intuition for the types of skills learned, the training dynamics, and how we avoid problematic behavior in previous work. In the second half, we show how the skills can be used for downstream tasks, via policy initialization, hierarchy, imitation, outperforming competitive baselines on most tasks. We encourage readers to view videos\footnote{\url{https://sites.google.com/view/diayn/}} and code\footnote{\url{https://github.com/ben-eysenbach/sac/blob/master/DIAYN.md}} for our experiments.

\subsection{Analysis of Learned Skills}

\begin{figure}[h]
    \centering
    \begin{subfigure}[b]{0.25 \linewidth}
        \includegraphics[width=\linewidth]{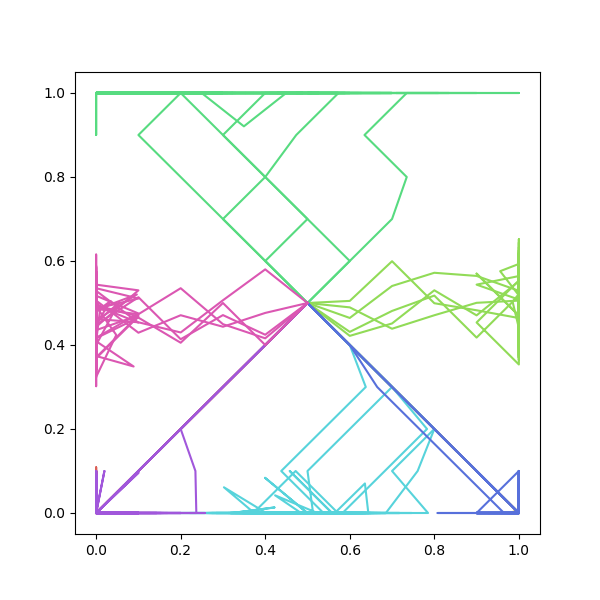}
        \caption{2D Navigation\label{fig:square}}
    \end{subfigure}
    \begin{subfigure}[b]{0.33 \linewidth}
        \includegraphics[width=\linewidth]{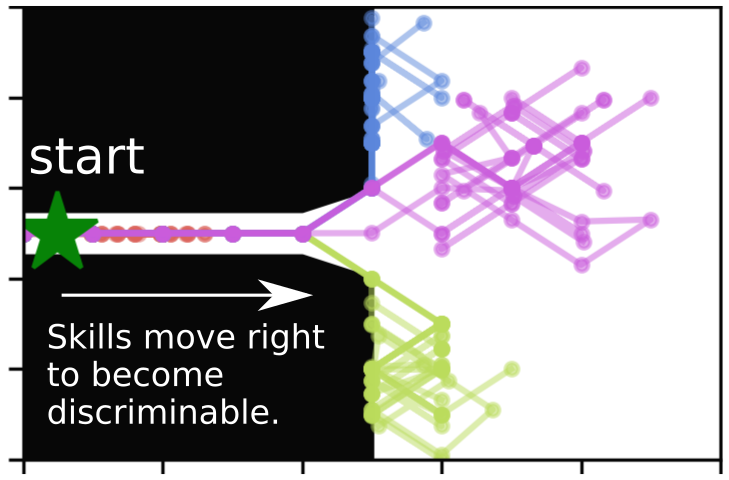}
        \caption{Overlapping Skills \label{fig:t-env}}
    \end{subfigure}
    \begin{subfigure}[b]{0.35 \linewidth}
        \includegraphics[width=\linewidth]{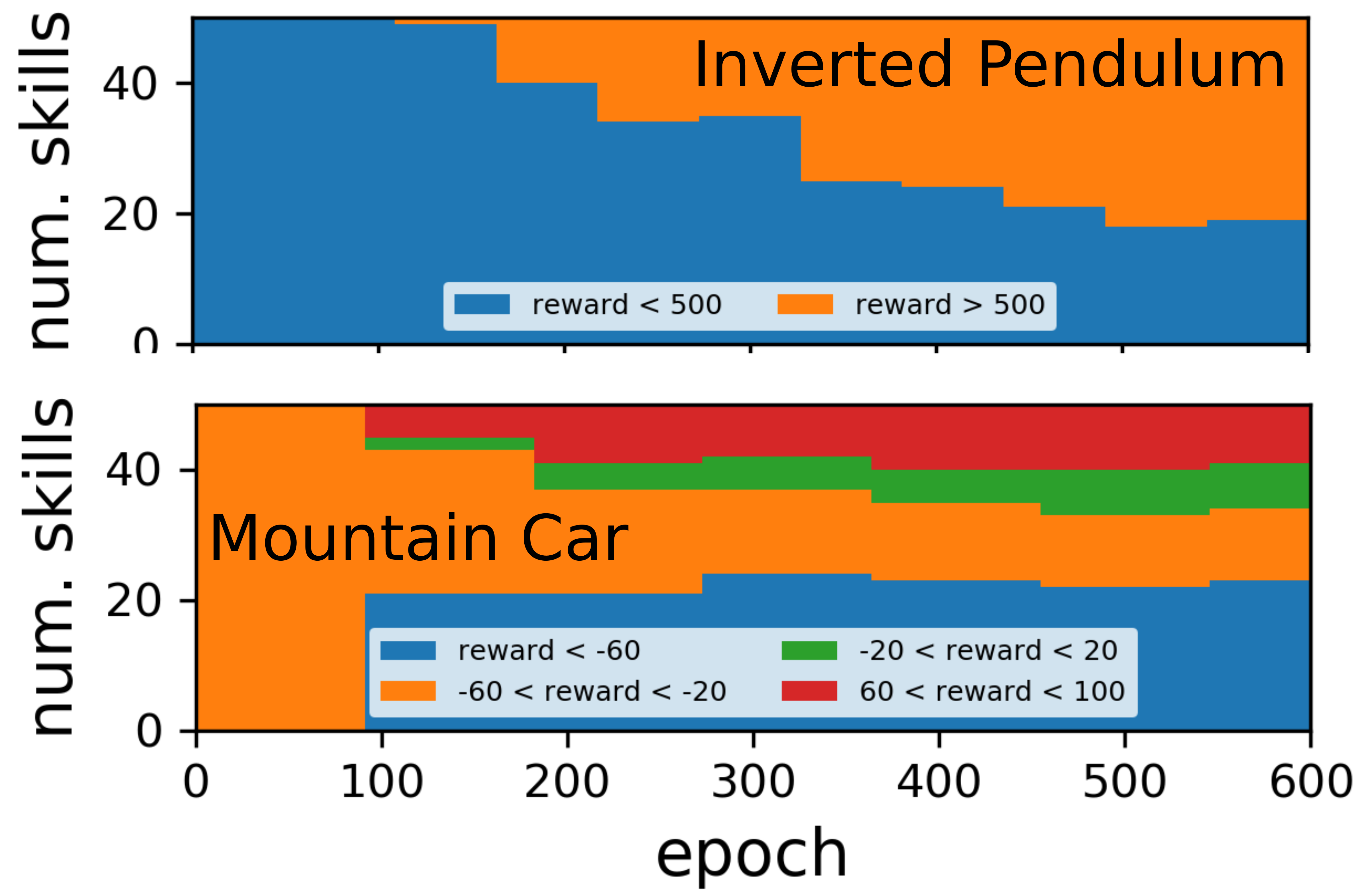}
        \caption{Training Dynamics}
   \end{subfigure}
    \caption{\figleft \, DIAYN skills in a simple navigation environment; \figcenter \, skills can overlap if they eventually become distinguishable; \figright \, diversity of the rewards increases throughout training. \label{fig:classic-control-hist}}
\end{figure}

\begin{question}
What skills does DIAYN learn?
\end{question}
We study the skills learned by DIAYN on tasks of increasing complexity, ranging from 2 DOF point navigation to 111 DOF ant locomotion. We first applied DIAYN to a simple 2D navigation environment. The agent starts in the center of the box, and can take actions to directly move its $(x, y)$ position. Figure~\ref{fig:square} illustrates how the 6 skills learned for this task move away from each other to remain distinguishable. Next, we applied DIAYN to two classic control tasks, inverted pendulum and mountain car. Not only does our approach learn skills that solve the task without rewards, it learns multiple distinct skills for solving the task. (See Appendix~\ref{appendix:more-analysis} for further analysis.)

\begin{figure}[h]
    \centering
    \includegraphics[width=\textwidth]{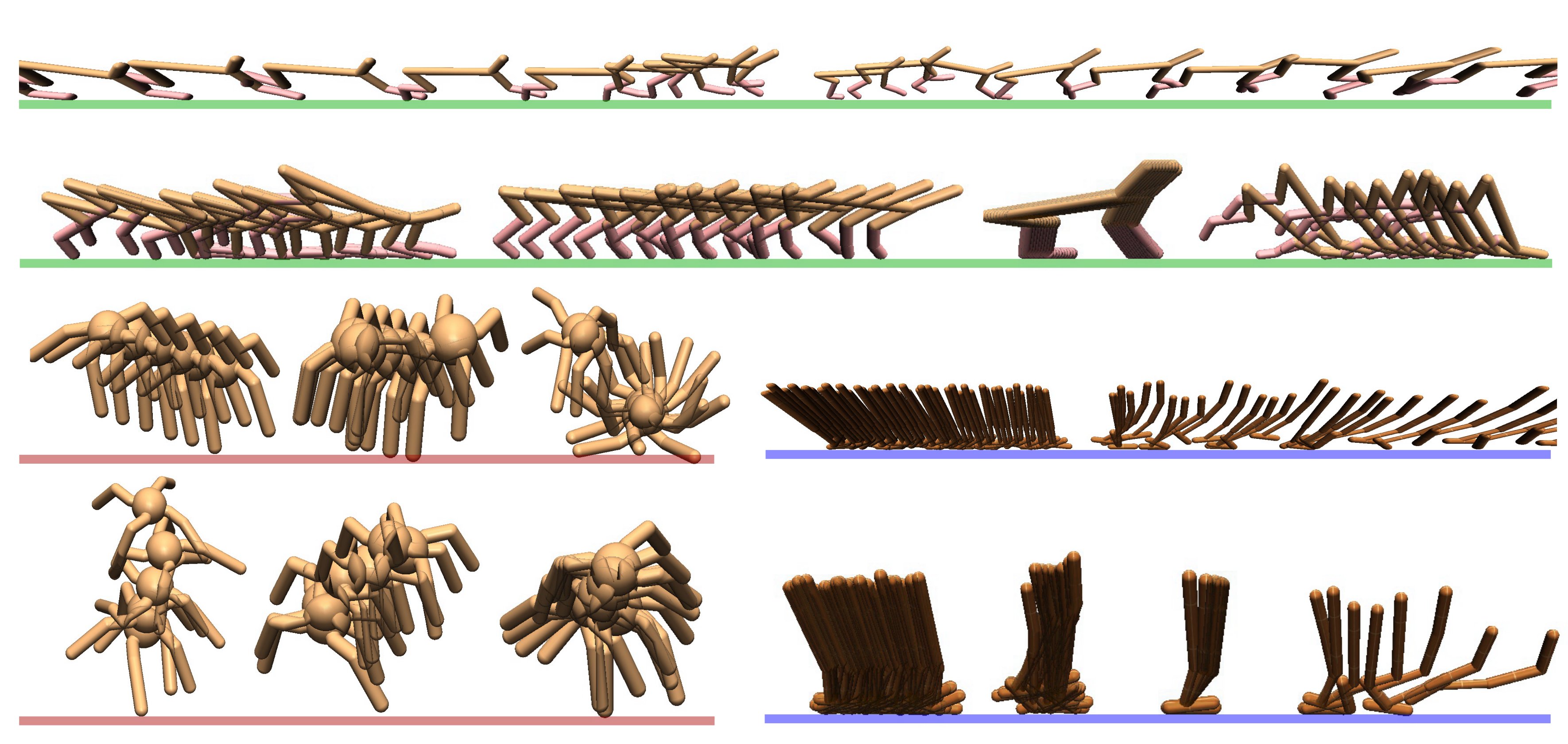}
    \vspace{-1em}
   \caption{\textbf{Locomotion skills}: Without any reward, DIAYN discovers skills for running, walking, hopping, flipping, and gliding. It is challenging to craft reward functions that elicit these behaviors. \label{fig:eye-candy}}
    \vspace{-0.5em}
\end{figure}

Finally, we applied DIAYN to three continuous control tasks~\citep{brockman2016openai}: half cheetah, hopper, and ant. As shown in Figure~\ref{fig:eye-candy}, we learn a diverse set of primitive behaviors for all tasks. For half cheetah, we learn skills for running forwards and backwards at various speeds, as well as skills for doing flips and falling over; ant learns skills for jumping and walking in many types of curved trajectories (though none walk in a straight line); hopper learns skills for balancing, hopping forward and backwards, and diving. See Appendix~\ref{appendix:exploration} for a comparison with VIME.

\begin{question}
How does the distribution of skills change during training?
\end{question}
While DIAYN learns skills without a reward function, as an outside observer, can we evaluate the skills throughout training to understand the training dynamics. Figure~\ref{fig:classic-control-hist} shows how the skills for inverted pendulum and mountain car become increasingly diverse throughout training (Fig.~\ref{fig:classic-control-hist-seeds} repeats this experiment for 5 random seeds, and shows that results are robust to initialization). Recall that our skills are learned with no reward, so it is natural that some skills correspond to small task reward while others correspond to large task reward.

\begin{question}
    Does discriminating on single states restrict DIAYN to learn skills that visit disjoint sets of states? 
\end{question}

Our discriminator operates at the level of states, not trajectories. While DIAYN favors skills that do not overlap, our method is not limited to learning skills that visit entirely disjoint sets of states. Figure~\ref{fig:t-env} shows a simple experiment illustrating this. The agent starts in a hallway (green star), and can move more freely once exiting the end of the hallway into a large room. Because RL agents are incentivized to maximize their cumulative reward, they may take actions that initially give no reward to reach states that eventually give high reward. In this environment, DIAYN learns skills that exit the hallway to make them mutually distinguishable.

\begin{question}
How does DIAYN differ from Variational Intrinsic Control (VIC)~\citep{gregor2016variational}?
\end{question}

\begin{wrapfigure}[11]{r}{0.5\textwidth}
    \vspace{-1.4em}
    \includegraphics[width=\linewidth]{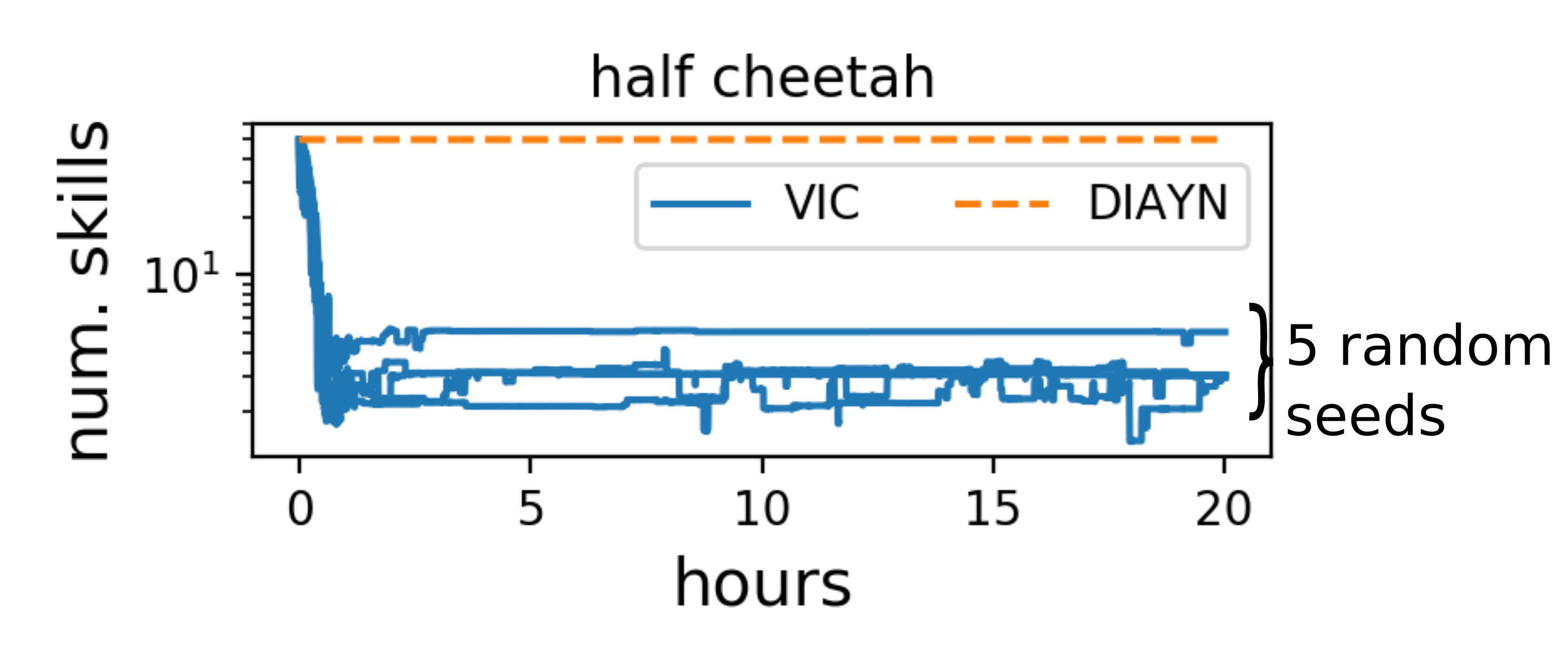}
    \vspace{-2em}
    \caption{\textbf{Why use a fixed prior?} In contrast to prior work, DIAYN continues to sample all skills throughout training.\label{fig:cheetah-entropy}}
    \vspace{-0.5em}
\end{wrapfigure}
The key difference from the most similar prior work on unsupervised skill discovery, VIC, is our decision to \emph{not} learn the prior $p(z)$.
We found that VIC suffers from the ``Matthew Effect''~\citet{merton1968matthew}: VIC's learned prior $p(z)$ will sample the more diverse skills more frequently, and hence only those skills will receive training signal to improve. To study this, we evaluated DIAYN and VIC on the half-cheetah environment, and plotting the effective number of skills (measured as $\exp(\h[Z])$) throughout training (details and more figures in Appendix~\ref{appendix:entropy}). The figure to the right shows how VIC quickly converges to a setting where it only samples a handful of skills. In contrast, DIAYN fixes the distribution over skills, which allows us to discover more diverse skills.
 
\subsection{Harnessing Learned Skills}
\label{sec:controlling}

The perhaps surprising finding that we can discover diverse skills without a reward function creates a building block for many problems in RL. For example, to find a policy that achieves a high reward on a task, it is often sufficient to simply choose the skill with largest reward. Three less obvious applications are adapting skills to maximize a reward, hierarchical RL, and imitation learning.

\subsubsection{Accelerating Learning with Policy Initialization}

After DIAYN learns task-agnostic skills without supervision, we can quickly adapt the skills to solve a desired task. Akin to the use of pre-trained models in computer vision, we propose that DIAYN can serve as unsupervised pre-training for more sample-efficient finetuning of task-specific policies.

\begin{figure}[ht]
    \centering
    \includegraphics[width=\linewidth]{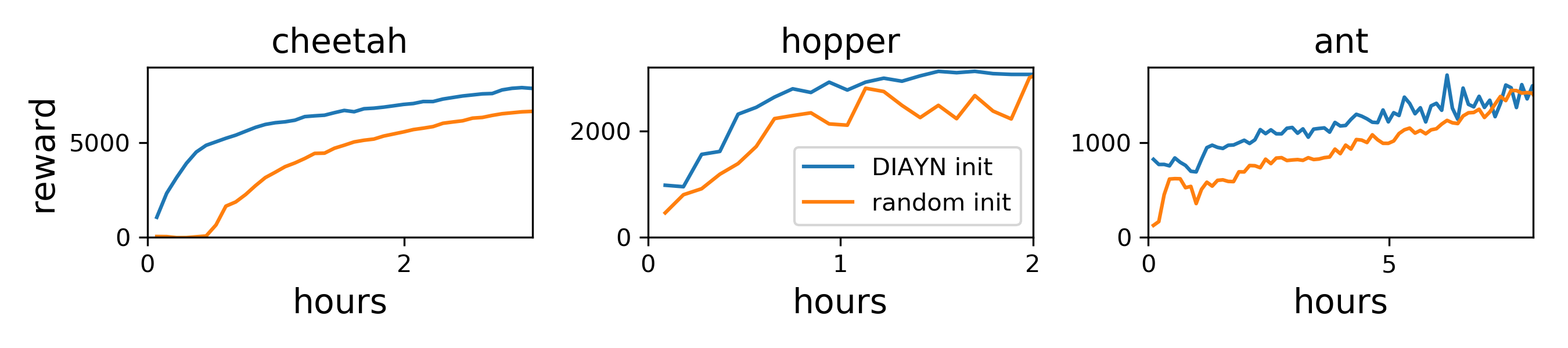}
    \vspace{-1em}
    \caption{\textbf{Policy Initialization}: Using a DIAYN skill to initialize weights in a policy accelerates learning, suggesting that pretraining with DIAYN may be especially useful in resource constrained settings. Results are averages across 5 random seeds.}
    \label{fig:maximizing-reward}
\end{figure}
\begin{question}
Can we use learned skills to directly maximize the task reward?
\end{question}
We take the skill with highest reward for each benchmark task and further finetune this skill using the task-specific reward function. We compare to a ``random initialization'' baseline that is initialized from scratch. Our approach differs from this baseline only in how weights are initialized. We initialize both the policy and value networks with weights learned during unsupervised pretraining. Although the critic networks learned during pretraining corresponds to the pseudo-reward from the discriminator (Eq.~\ref{eq:reward}) and not the true task reward, we found empirically that the pseudo-reward was close to the true task reward for the best skill, and initializing the critic in addition to the actor further sped up learning.
Figure~\ref{fig:maximizing-reward} shows both methods applied to half cheetah, hopper, and ant.
We assume that the unsupervised pretraining is free (e.g., only the reward function is expensive to compute) or can be amortized across many tasks, so we omit pretraining steps from this plot.
On all tasks, unsupervised pretraining enables the agent to learn the benchmark task more quickly.

\subsubsection{Using Skills for Hierarchical RL}
\label{sec:hrl}

In theory, hierarchical RL should decompose a complex task into motion primitives, which may be reused for multiple tasks. In practice, algorithms for hierarchical RL can encounter many problems:
(1) each motion primitive reduces to a single action~\citep{bacon2017option}, (2) the hierarchical policy only samples a single motion primitive~\citep{gregor2016variational}, or (3) all motion primitives attempt to do the entire task. In contrast, DIAYN discovers diverse, \emph{task-agnostic} skills, which hold the promise of acting as a building block for hierarchical RL.

\begin{question}
Are skills discovered by DIAYN useful for hierarchical RL?
\end{question}

We propose a simple extension to DIAYN for hierarchical RL, and find that simple algorithm outperforms competitive baselines on two challenging tasks. To use the discovered skills for hierarchical RL, we learn a meta-controller whose actions are to choose which skill to execute for the next $k$ steps (100 for ant navigation, 10 for cheetah hurdle). The meta-controller has the same observation space as the skills.

\begin{wrapfigure}[9]{r}{0.5\textwidth}
    \vspace{-1em}
    \includegraphics[width=\linewidth]{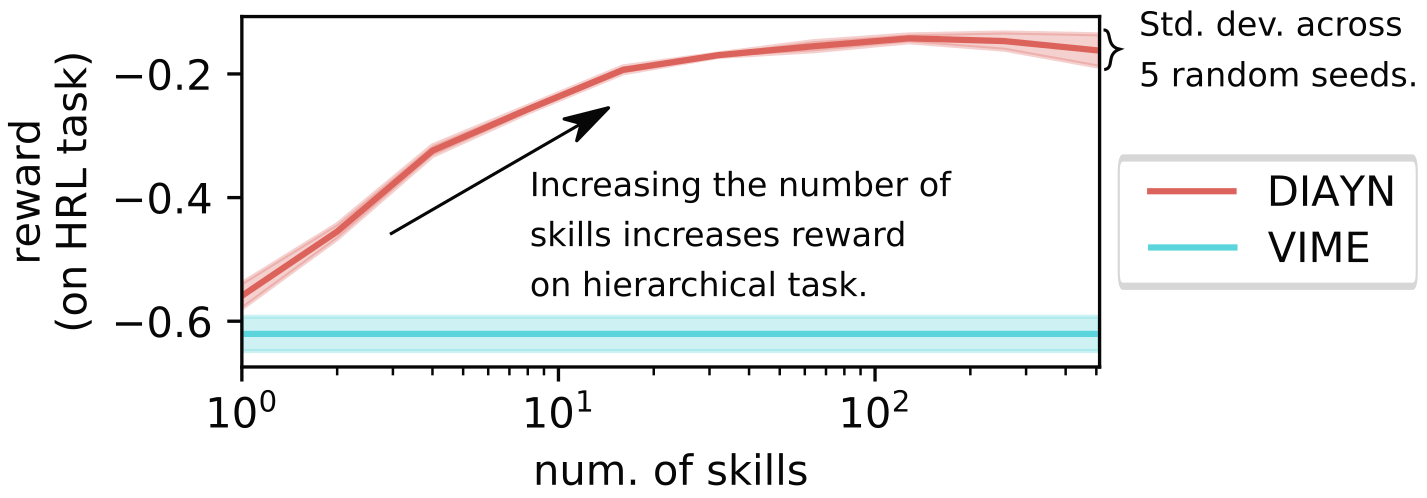}
    \vspace{-1em}
    \caption{\textbf{Hierarchical RL} \label{fig:hrl-point}}
\end{wrapfigure}
As an initial test, we applied the hierarchical RL algorithm to a simple 2D point navigation task (details in Appendix~\ref{appendix:hrl}). Figure~\ref{fig:hrl-point} illustrates how the reward on this task increases with the number of skills; error bars show the standard deviation across 5 random seeds. To ensure that our goals were not cherry picked, we sampled 25 goals evenly from the state space, and evaluated each random seed on all goals. We also compared to VIME~\citep{houthooft2016vime}. Note that even the best random seed from VIME significantly under-performs DIAYN. This is not surprising: whereas DIAYN explicitly skills that effectively partition the state space, VIME attempts to learn a single policy that visits many states. 

\begin{figure}[ht]
    \centering
    \includegraphics[width=0.9\linewidth]{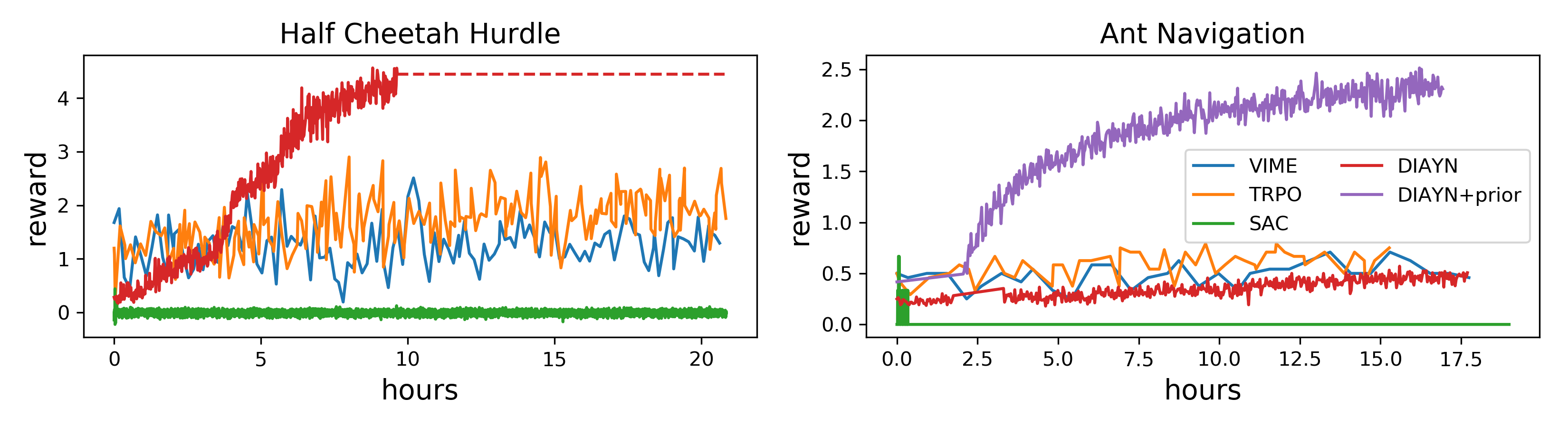}
    \caption{\textbf{DIAYN for Hierarchical RL}: By learning a meta-controller to compose skills learned by DIAYN, cheetah quickly learns to jump over hurdles and ant solves a sparse-reward navigation task.}
    \label{fig:hrl-experiment}
\end{figure}

\begin{wrapfigure}[8]{r}{0.5\textwidth}
    \vspace{-0.5em}
    \centering
    \begin{subfigure}{0.49 \linewidth}
        \includegraphics[width=\linewidth]{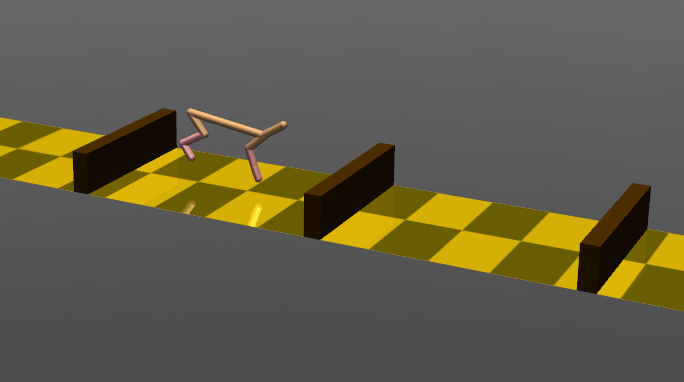}
        \caption*{Cheetah Hurdle}
    \end{subfigure}
    \begin{subfigure}{0.49 \linewidth}
        \includegraphics[width=\linewidth]{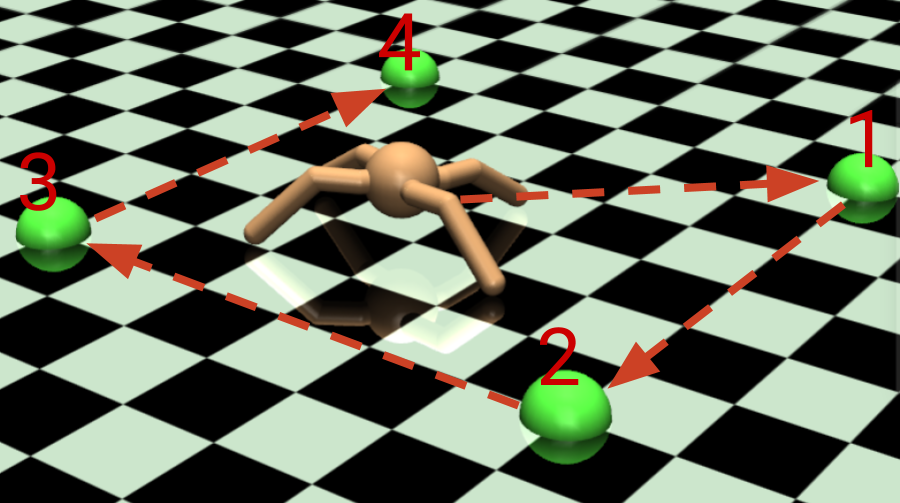}
        \caption*{Ant Navigation}
    \end{subfigure}
\end{wrapfigure}
Next, we applied the hierarchical algorithm to two challenging simulated robotics environment. On the cheetah hurdle task, the agent is rewarded for bounding up and over hurdles, while in the ant navigation task, the agent must walk to a set of 5 waypoints in a specific order, receiving only a sparse reward upon reaching each waypoint. The sparse reward and obstacles in these environments make them exceeding difficult for non-hierarchical RL algorithms. Indeed, state of the art RL algorithms that do not use hierarchies perform poorly on these tasks. Figure~\ref{fig:hrl-experiment} shows how DIAYN outperforms state of the art on-policy RL (TRPO~\citep{schulman2015trust}), off-policy RL (SAC~\citep{haarnoja2018soft}), and exploration bonuses (VIME~\citep{houthooft2016vime}). This experiment suggests that unsupervised skill learning provides an effective mechanism for combating challenges of exploration and sparse rewards in RL.

\begin{question}
How can DIAYN leverage prior knowledge about what skills will be useful?
\end{question}
If the number of possible skills grows exponentially with the dimension of the task observation, one might imagine that DIAYN would fail to learn skills necessary to solve some tasks. While we found that DIAYN \emph{does} scale to tasks with more than 100 dimensions (ant has 111), we can also use a simple modification to bias DIAYN towards discovering particular types of skills. We can condition the discriminator on only a subset of the observation space, or any other function of the observations. In this case, the discriminator maximizes $\E[\log q_{\phi}(z \mid f(s))]$. For example, in the ant navigation task, $f(s)$ could compute the agent's center of mass, and DIAYN would learn skills that correspond to changing the center of mass. The ``DIAYN+prior'' result in Figure~\ref{fig:hrl-experiment} (right) shows how incorporating this prior knowledge can aid DIAYN in discovering useful skills and boost performance on the hierarchical task. (No other experiments or figures in this paper used this prior.) The key takeaway is that while DIAYN is primarily an unsupervised RL algorithm, there is a simple mechanism for incorporating supervision when it is available. Unsurprisingly, we perform better on hierarchical tasks when incorporating more supervision.

\begin{figure}[t]
    \centering
    \includegraphics[width=0.9\linewidth]{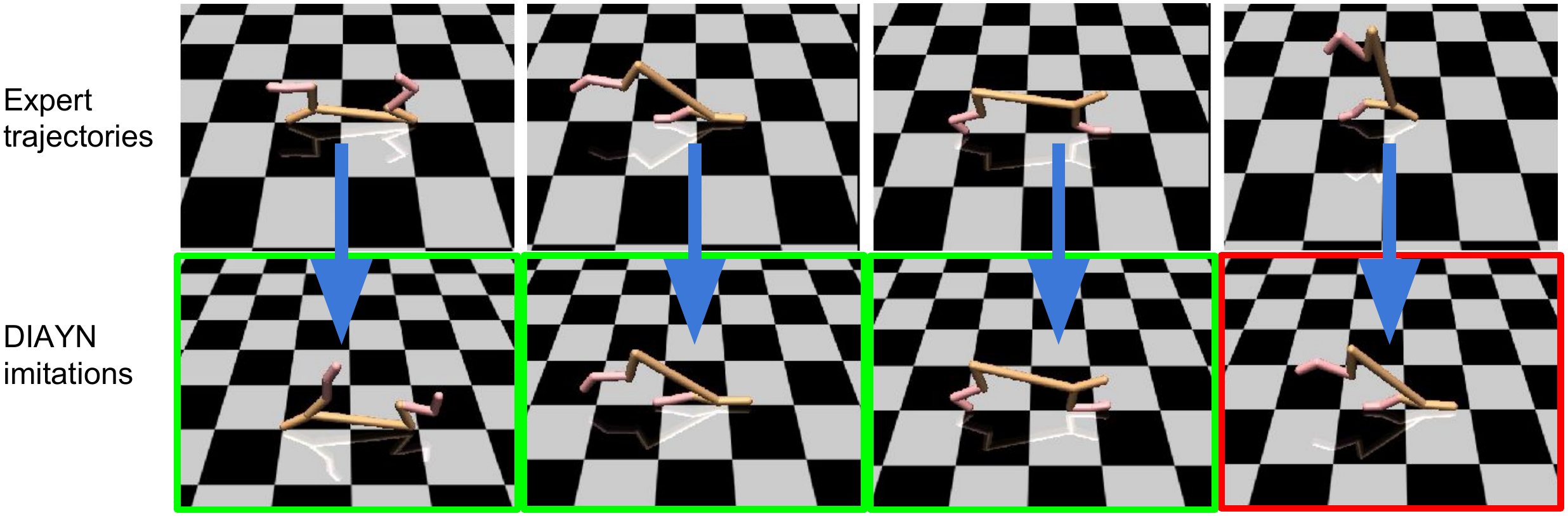}
    \caption{\textbf{Imitating an expert}: DIAYN imitates an expert standing upright, flipping, and faceplanting, but fails to imitate a handstand.\label{fig:imitation}}
\end{figure}
 
\subsubsection{Imitating an Expert}

\begin{question}
Can we use learned skills to imitate an expert?
\end{question}

Aside from maximizing reward with finetuning and hierarchical RL, we can also use learned skills to follow expert demonstrations. One use-case is where a human manually controls the agent to complete a task that we would like to automate. Simply replaying the human's actions fails in stochastic environments, cases where closed-loop control is necessary. A second use-case involves an existing agent with a hard coded, manually designed policy. Imitation learning replaces the existing policy with a similar yet differentiable policy, which might be easier to update in response to new constraints or objectives.
We consider the setting where we are given an expert trajectory consisting of states, without actions, defined as $\tau^* = \{(s_i)\}_{1 \le i \le N}$. Our goal is to obtain a feedback controller that will reach the same states.
Given the expert trajectory, we use our learned discriminator to estimate which skill was most likely to have generated the trajectory. This optimization problem, which we solve for categorical $z$ by enumeration, is equivalent to an M-projection~\citep{christopher2016pattern}:
\begin{equation*}
    \hat{z} = \argmax_z \Pi_{s_t \in \tau^*} q_{\phi}(z \mid s_t) \label{eq:imitation}
\end{equation*}
We qualitatively evaluate this approach to imitation learning on half cheetah. Figure~\ref{fig:imitation} (left) shows four imitation tasks, three of which our method successfully imitates.
 We quantitatively evaluate this imitation method on classic control tasks in Appendix~\ref{appendix:imitation}.

\section{Conclusion}

In this paper, we present DIAYN, a method for learning skills without reward functions. We show that DIAYN learns diverse skills for complex tasks, often solving benchmark tasks with one of the learned skills without actually receiving any task reward. We further proposed methods for using the learned skills (1) to quickly adapt to a new task, (2) to solve complex tasks via hierarchical RL, and (3) to imitate an expert.
As a rule of thumb, DIAYN may make learning a task easier by replacing the task's complex action space with a set of useful skills. 
DIAYN could be combined with methods for augmenting the observation space and reward function. Using the common language of information theory, a joint objective can likely be derived.
DIAYN may also more efficiently learn from human preferences by having humans select among learned skills.
Finally, the skills produced by DIAYN might be used by game designers to allow players to control complex robots and by artists to animate characters.

{ \small
\bibliography{iclr_diayn}
\bibliographystyle{iclr2019_conference}
}

\clearpage
\appendix

\section{Pseudo-Reward}
\label{appendix:pseudo-reward}
The $\log p(z)$ term in Equation~\ref{eq:reward} is a baseline that does not depend on the policy parameters $\theta$, so one might be tempted to remove it from the objective. We provide a two justifications for keeping it.
First, assume that episodes never terminate, but all skills eventually converge to some absorbing state (e.g., with all sensors broken). At this state, the discriminator cannot distinguish the skills, so its estimate is $\log q(z \mid s) = \log(1/N)$, where $N$ is the number of skills. For practical reasons, we want to restart the episode after the agent reaches the absorbing state. Subtracting $\log(z)$ from the pseudo-reward at every time step in our finite length episodes is equivalent to pretending that episodes never terminate and the agent gets reward $\log(z)$ after our ``artificial'' termination.
Second, assuming our discriminator $q_{\phi}$ is better than chance, we see that $q_{\phi}(z \mid s) \ge p(z)$. Thus, subtracting the $\log p(z)$ baseline ensures our reward function is always non-negative, encouraging the agent to stay alive. Without this baseline, an optimal agent would end the episode as soon as possible.\footnote{In some environments, such as mountain car, it is desirable for the agent to end the episode as quickly as possible. For these types of environments, the $\log p(z)$ baseline can be removed.}

\section{Optimum for Gridworlds}
\label{appendix:proof}
For simple environments, we can compute an analytic solution to the DIAYN objective. For example, consider a $N \times N$ gridworld, where actions are to move up/down/left/right. Any action can be taken in any state, but the agent will stay in place if it attempts to move out of the gridworld. We use $(x, y)$ to refer to states, where $x, y \in \{1, 2, \cdots, N\}$.

For simplicity, we assume that, for every skill, the distribution of states visited exactly equals that skill's stationary distribution over states. To clarify, we will use $\pi_z$ to refer to the policy for skill $z$. We use $\rho_{\pi_z}$ to indicate skill $z$'s stationary distribution over states, and $\hat{\rho}_{\pi_z}$ as the empirical distribution over states within a single episode. Our assumption is equivalent to saying
\begin{equation*}
\rho_{\pi_z}(s) = \hat{\rho}_{\pi_z}(s) \qquad \forall s \in \mathcal{S}
\end{equation*}
One way to ensure this is to assume infinite-length episodes.

We want to show that a set of skills that evenly partitions the state space is the optimum of the DIAYN objective for this task. While we will show this only for the 2-skill case, the 4 skill case is analogous.

\begin{figure}[h]
    \centering
    \begin{subfigure}[b]{0.49 \linewidth}
        \centering
        \includegraphics[width=0.8\linewidth]{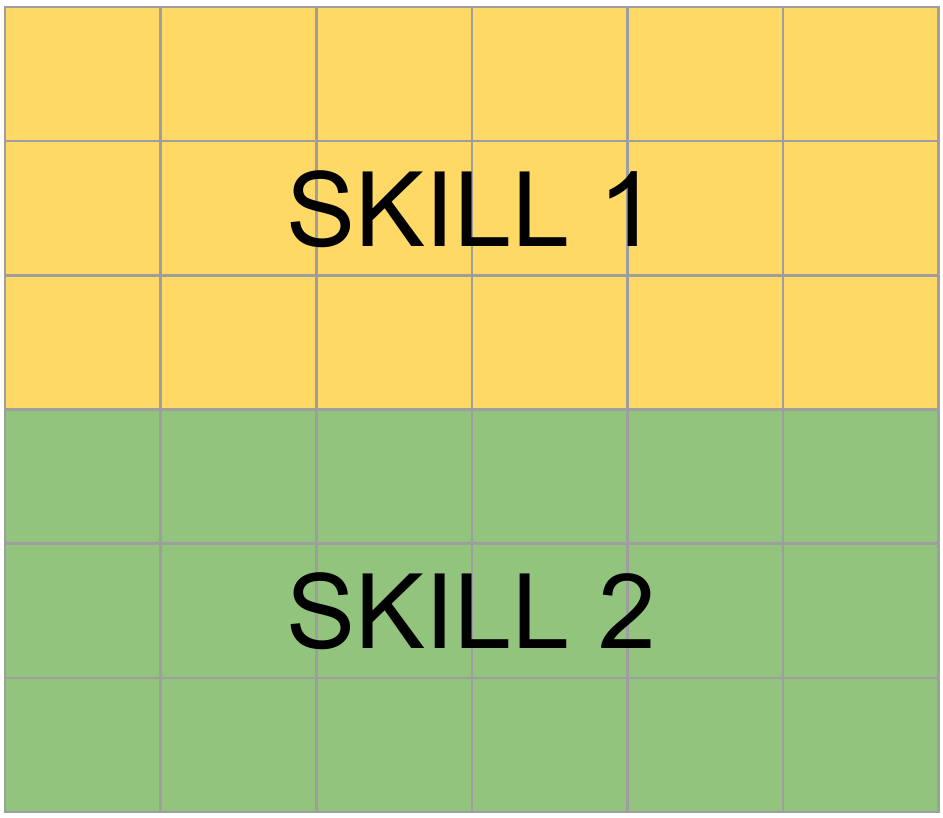}
        \caption{Optimum Skills for Gridworld with 2 Skills \label{fig:gridworld-2-skills-a}}
    \end{subfigure}
    \begin{subfigure}[b]{0.49 \linewidth}
        \centering
        \includegraphics[width=0.8\linewidth]{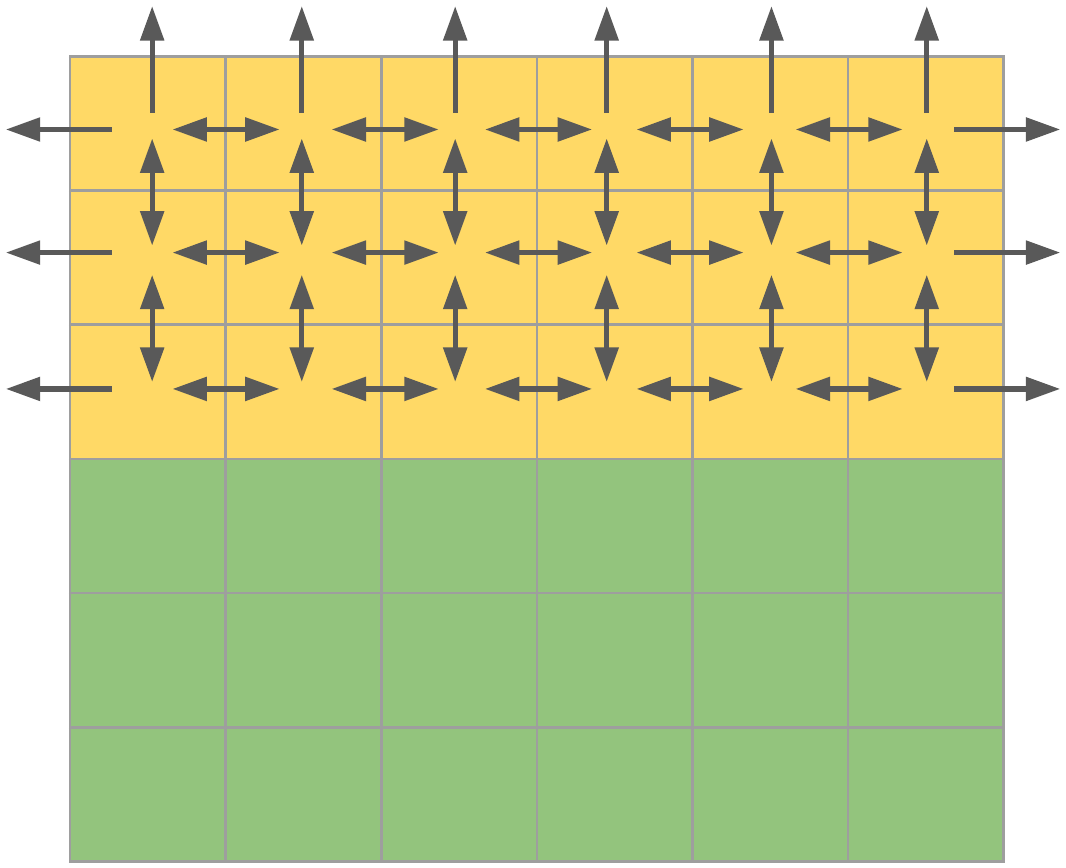}
        \caption{Policy for one of the optimal skills. The agent stays in place when it attempts to leave the gridworld. \label{fig:gridworld-2-skills-b}}
    \end{subfigure}
    \caption{\textbf{Optimum for Gridworlds:} For gridworld environments, we can compute an analytic solution to the DIAYN objective. \label{fig:optimum}}
\end{figure}
The optimum policies for a set of two skills are those which evenly partition the state space. We will show that a top/bottom partition is one such (global) optima. The left/right case is analogous.
\begin{lemma}
A pair of skills with state distributions given below (and shown in Figure~\ref{fig:optimum}) are an optimum for the DIAYN objective with no entropy regularization ($\alpha = 0$).
\begin{equation}
    \rho_{\pi_1}(x, y) = \frac{2}{N^2} \delta(y \le N / 2) \quad \text{ and } \quad \rho_{\pi_2}(x, y) = \frac{2}{N^2} \delta(y > N / 2)
    \label{eq:gridworld-distribution}
\end{equation}
\label{lemma:gridworlds-2}
\end{lemma}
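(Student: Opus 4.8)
The plan is to reduce the DIAYN objective with $\alpha=0$ to the single term $I(S;Z) = \h[Z] - \h[Z\mid S]$, and then show that among all ways of assigning a stationary distribution to each of two skills, the proposed even top/bottom partition maximizes $I(S;Z)$. Since $p(z)$ is fixed uniform, $\h[Z] = \log 2$ is constant, so maximizing $I(S;Z)$ is equivalent to minimizing $\h[Z\mid S] = -\E_{s\sim\rho}\sum_z p(z\mid s)\log p(z\mid s)$, where $\rho(s) = \tfrac12(\rho_{\pi_1}(s) + \rho_{\pi_2}(s))$ is the mixture state distribution and $p(z\mid s)$ is the Bayes posterior. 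This quantity is $\ge 0$, with equality iff for every state $s$ in the support of $\rho$ the posterior $p(z\mid s)$ is degenerate, i.e.\ each visited state is claimed by exactly one skill. So the proof has two halves: (i) the lower bound $I(S;Z)\le \log 2$ always holds; (ii) the proposed pair of distributions in Equation~\ref{eq:gridworld-distribution} attains it.

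Part (ii) is the routine check: the two distributions have disjoint supports (the sets $\{y\le N/2\}$ and $\{y>N/2\}$, which with $N$ even partition the $N^2$ states into two halves of $N^2/2$ states each), each is uniform on its half with mass $2/N^2$ summing to $1$, so $p(z\mid s)$ is deterministic for every $s$, giving $\h[Z\mid S]=0$ and $I(S;Z)=\log 2$. Part (i) is immediate from $\h[Z\mid S]\ge 0$ together with $I(S;Z)=\h[Z]-\h[Z\mid S]\le\h[Z]=\log 2$. The one genuine subtlety, and what I expect to be the main obstacle, is justifying that we are actually optimizing over a feasible set that \emph{contains} a disjoint-support pair — i.e.\ that there exist valid gridworld policies whose stationary distributions are exactly the uniform-on-a-half distributions of Equation~\ref{eq:gridworld-distribution}. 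This is where Figure~\ref{fig:gridworld-2-skills-b} comes in: one exhibits an explicit policy (e.g.\ a random walk restricted to the relevant half, using the "stay in place when trying to leave the grid" dynamics at the boundary to keep the walk inside its half) whose stationary distribution is uniform on that half, invoking the standard fact that a symmetric/reversible random walk on a connected graph with uniform-degree-like structure has uniform stationary distribution — or more simply, that the uniform distribution is stationary for the transition matrix of the "move to a uniformly random neighbor, staying put on boundary attempts" chain because that chain is doubly stochastic on the half-grid.

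A cleaner way to organize part (i) that also makes the "optimum" claim precise: for \emph{any} pair $(\rho_{\pi_1},\rho_{\pi_2})$ realizable by policies, $\mathcal{F}(\theta) = I(S;Z) \le \log 2$, and we have just shown the proposed pair achieves $\log 2$; hence it is a global optimum. (If one wants uniqueness up to symmetry — that \emph{only} even partitions are optimal — one additionally notes equality in $\h[Z\mid S]\ge 0$ forces disjoint supports, and then the entropy term, were $\alpha>0$, or a tie-breaking argument, forces each skill's distribution to be uniform on its support and the supports to have equal measure under the uniform prior; but Lemma~\ref{lemma:gridworlds-2} as stated only asserts optimality, not uniqueness, so this last step is optional.) I would present part (i) first as a one-line upper bound, then part (ii) as the explicit construction with the policy of Figure~\ref{fig:gridworld-2-skills-b}, and close by remarking the left/right partition is handled identically by symmetry and the $4$-skill case by partitioning into quadrants with the same argument.
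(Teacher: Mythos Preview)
Your proposal is correct and follows essentially the same argument as the paper: reduce the $\alpha=0$ objective to $\h[Z]-\h[Z\mid S]$, observe that the fixed uniform prior makes $\h[Z]=\log 2$ the maximum possible value, and note that the disjoint-support partition yields $\h[Z\mid S]=0$; the paper likewise handles feasibility separately by exhibiting the explicit policy of Figure~\ref{fig:gridworld-2-skills-b} and invoking detailed balance. Your write-up is somewhat more careful in separating the upper bound from its attainment and in spelling out the doubly-stochastic/reversibility justification for the uniform stationary distribution, but the underlying logic is identical.
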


Before proving Lemma~\ref{lemma:gridworlds-2}, we note that there exist policies that achieve these stationary distributions. Figure~\ref{fig:gridworld-2-skills-b} shows one such policy, were each arrow indicates a transition with probability $\frac{1}{4}$. Note that when the agent is in the bottom row of yellow states, it does not transition to the green states, and instead stays in place with probability $\frac{1}{4}$. Note that the distribution in Equation~\ref{eq:gridworld-distribution} satisfies the detailed balance equations~\citep{murphy2012machine}.

\begin{proof}
Recall that the DIAYN objective with no entropy regularization is:
\begin{equation*}
    - \h[Z \mid S] + \h[Z]
\end{equation*}
Because the skills partition the states, we can always infer the skill from the state, so $\h[Z \mid S] = 0$. By construction, the prior distribution over $\h[Z]$ is uniform, so $\h[Z] = \log(2)$ is maximized. Thus, a set of two skills that partition the state space maximizes the un-regularized DIAYN objective.
\end{proof}

Next, we consider the regularized objective. In this case, we will show that while an even partition is not perfectly optimal, it is ``close'' to optimal, and its ``distance'' from optimal goes to zero as the gridworld grows in size. This analysis will give us additional insight into the skills preferred by the DIAYN objective.

\begin{lemma}
A pair of skills with state distributions given given in Equation~\ref{eq:gridworld-distribution} achieve an DIAYN objective within a factor of $O(1/N)$ of the optimum, where $N$ is the gridworld size.
\label{lemma:gridworlds-2-alpha}
\end{lemma}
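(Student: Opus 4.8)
The plan is to write the regularized DIAYN objective as the un-regularized objective $\h[Z] - \h[Z\mid S]$ plus $\alpha$ times the average policy entropy term $\h[A\mid S,Z]$, and to compare the even-partition solution against the true optimum term-by-term. For the even top/bottom partition from Equation~\ref{eq:gridworld-distribution}, Lemma~\ref{lemma:gridworlds-2} already shows the un-regularized part is exactly optimal ($\h[Z\mid S]=0$, $\h[Z]=\log 2$), so the only loss comes from the entropy term $\alpha\,\h[A\mid S,Z]$. First I would compute $\h[A\mid S,Z]$ for the explicit stationary policy exhibited in Figure~\ref{fig:gridworld-2-skills-b}: at interior states of a skill's partition all four moves have probability $\tfrac14$, giving per-state entropy $\log 4$, while at the $O(N)$ boundary/edge states (where a move is ``wasted'' by staying in place) the action distribution is still over four labeled actions, so the \emph{action} entropy is actually still $\log 4$ — the subtlety is that these states have the agent self-loop, so one must be careful whether the relevant quantity is entropy over actions (always $\log 4$ here) or something that degrades. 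I would argue that because $p(z)$ is uniform over $2$ skills and each skill's stationary distribution is uniform over its $N^2/2$ cells, $\h[A\mid S,Z]$ for the even partition is within $O(1/N)$ of $\log 4$, the maximum possible value of $\h[A\mid S,Z]$ (four actions).

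Next I would bound the objective value of the \emph{true} optimum. The key point is that the true optimum must also (approximately) partition the state space to keep $\h[Z\mid S]\approx 0$ and $\h[Z]=\log 2$, and it cannot achieve average action entropy exceeding $\log 4$; hence $\mathcal{F}_\alpha(\text{opt}) \le \log 2 + \alpha\log 4$. Meanwhile the even partition achieves at least $\log 2 + \alpha(\log 4 - O(1/N))$. Therefore the additive gap is $O(\alpha/N)$, and since the optimum value is $\Theta(1)$ (a constant independent of $N$), the \emph{multiplicative} gap is $O(1/N)$, which is the claim. I would also note, as promised in the surrounding text, that this analysis reveals \emph{why} DIAYN prefers certain skills: among all partitions into two connected regions of equal size, the objective rewards the one whose induced maximum-entropy stationary policy has the fewest ``wall-hugging'' states, i.e., the shortest internal boundary — a mild isoperimetric preference that vanishes as $N\to\infty$.

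The main obstacle I anticipate is pinning down precisely what ``the optimum'' is in the regularized case and showing no competitor beats $\log 2 + \alpha\log 4$ by more than $O(\alpha/N)$: in principle a non-partitioning set of skills could trade a small increase in $\h[Z\mid S]$ (hurting the first part by $O(1/N)$) for a larger $\h[A\mid S,Z]$, and one must check this trade never yields a net gain beyond the stated order. I would handle this by the crude but sufficient bounds $\h[Z\mid S]\ge 0$ and $\h[A\mid S,Z]\le\log 4$ applied to \emph{any} policy, which immediately caps every competitor's value at $\log 2 + \alpha\log 4$ and makes the comparison one-sided — so in fact the argument reduces to the single computation of $\h[A\mid S,Z]$ for the even partition and the observation that it loses only $O(1/N)$ relative to $\log 4$. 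A secondary fussy point is defining the stationary policy cleanly at corners (two walls) versus edges (one wall) of each partition and counting those states, but this is the routine $O(N)$-out-of-$N^2$ estimate and not a real difficulty.
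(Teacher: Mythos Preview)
Your plan is the paper's plan: invoke Lemma~\ref{lemma:gridworlds-2} to pin $\h[Z\mid S]=0$ and $\h[Z]=\log 2$, compute $\h[A\mid S,Z]$ for the explicit partition policy of Figure~\ref{fig:gridworld-2-skills-b}, and compare against the crude upper bound $\log 2+\log 4$ obtained from $\h[Z\mid S]\ge 0$ and $\h[A\mid S,Z]\le\log 4$. You are in fact more explicit than the paper about that upper-bound step; the paper simply asserts the gap after computing $\h[A\mid S,Z]$.

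Where you wobble is in locating the deficit in $\h[A\mid S,Z]$. You look at external gridworld walls and corners, note that all four labeled actions remain available there (one merely self-loops), and correctly conclude the per-state action entropy at those states is still $\log 4$ --- but then those states contribute \emph{no} loss, and your ``within $O(1/N)$ of $\log 4$'' is left without a source. The paper's deficit lives at the \emph{internal} interface between the two skills' regions: there is no wall there, so to keep its stationary distribution confined to its own half, each skill's policy must suppress the action that would cross into the other skill's territory. At those $N$ internal-border states the paper's policy has per-state action entropy $\tfrac{3}{4}\log 4$ rather than $\log 4$; averaging over the $N^2/2$ states of the partition gives
\[
\h[A\mid S,Z]=\frac{2}{N^2}\Bigl((N/2-1)N\cdot\log 4 + N\cdot\tfrac{3}{4}\log 4\Bigr)=\log(4)\Bigl(1-\tfrac{1}{2N}\Bigr),
\]
hence an additive gap of $\tfrac{\log 4}{2N}$ from the upper bound. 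Redirect your boundary analysis from the walls to this internal interface and your argument coincides with the paper's; your closing remark about DIAYN preferring short internal borders is then exactly the intended takeaway (Figure~\ref{fig:gridworld-partition}).
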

\begin{proof}
Recall that the DIAYN objective with no entropy regularization is:
\begin{equation*}
\h[A \mid S, Z] - \h[Z \mid S] + \h[Z]
\end{equation*}
We have already computed the second two terms in the previous proof: $\h[Z \mid S] = 0$ and \mbox{$\h[Z] = \log(2)$}. For computing the first term, it is helpful to define the set of ``border states'' for a particular skill as those that do not neighbor another skill. For the skill 1 in Figure~\ref{fig:optimum} (colored yellow), the border states are: $\{(x, y) \mid y = 4\}$. Now, computing the first term is straightforward:
\begin{align*}
    \h[A \mid S, Z] &= \frac{2}{N^2} \bigg( \underbrace{(N/2 - 1)N}_{\text{non-border states}} \log(4) + \underbrace{N}_{\text{border states}} \frac{3}{4} \log(4) \bigg) \\
                    &= \frac{2 \log(4)}{N^2} \bigg(\frac{1}{2}N^2 - \frac{1}{4} N \bigg) \\
                    &= \log(4)(1 - \frac{1}{2N})
\end{align*}
Thus, the overall objective is within $\frac{\log(4)}{2N}$ of optimum.
\end{proof}

\begin{figure}[h]
    \centering
    \begin{subfigure}[b]{0.49\linewidth}
        \centering
        \includegraphics[width=0.8\linewidth]{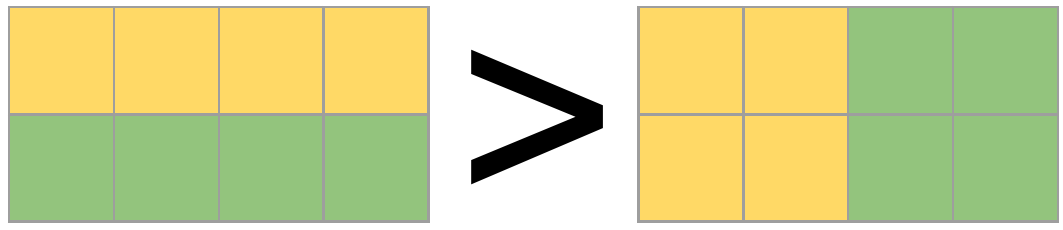}
        \caption{\label{fig:gridworld-partition-a}}
    \end{subfigure}
    \begin{subfigure}[b]{0.49\linewidth}
        \centering
        \includegraphics[width=0.8\linewidth]{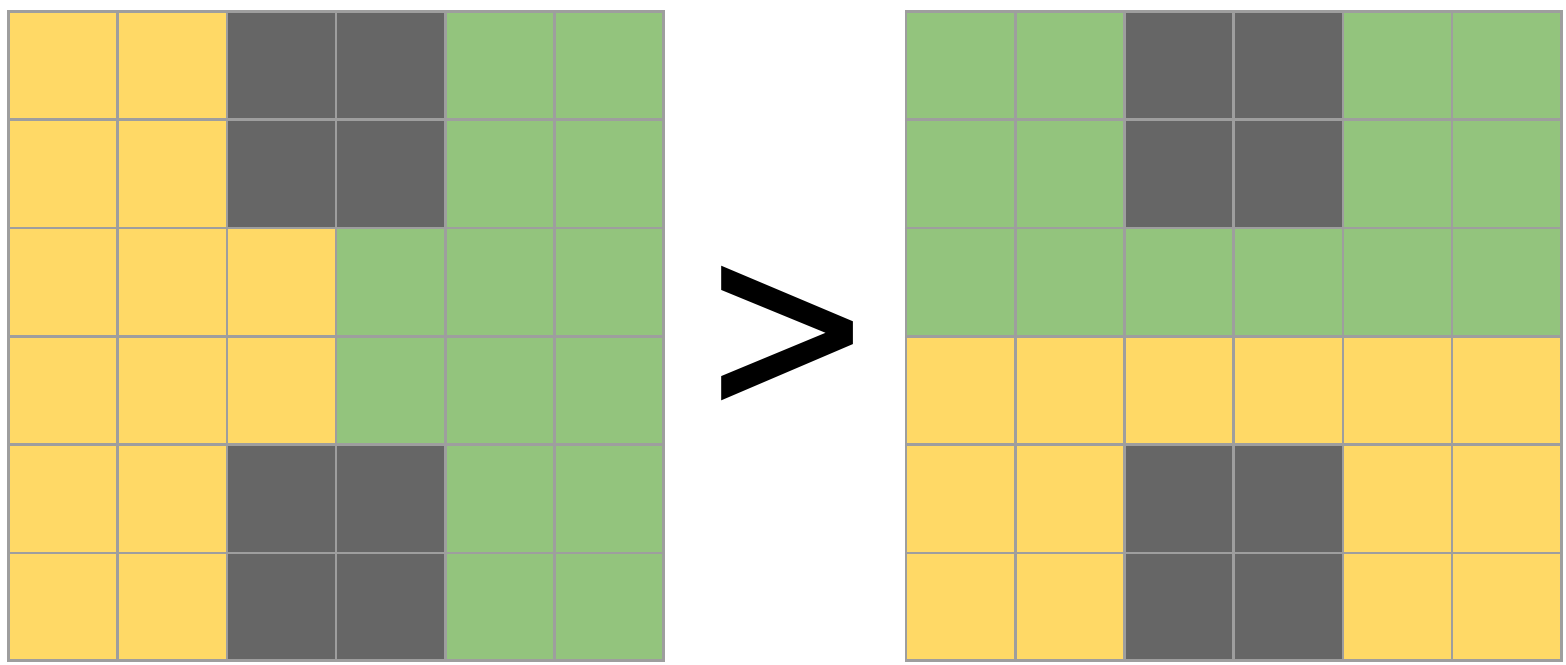}
        \caption{\label{fig:gridworld-partition-b}}
    \end{subfigure}
    \caption{The DIAYN objective prefers skills that \figleft \, partition states into sets with short borders  and \figright \, which correspond to bottleneck states.\label{fig:gridworld-partition}}
\end{figure}

Note that the term for maximum entropy over actions ($\h[A \mid S, Z]$) comes into conflict with the term for discriminability ($-\h[Z \mid S]$) at states along the border between two skills. Everything else being equal, this conflict encourages DIAYN to produce skills that have small borders, as shown in Figure~\ref{fig:gridworld-partition}. For example, in a gridworld with dimensions $N < M$, a pair of skills that split along the first dimension (producing partitions of size $(N, M/2)$) would achieve a larger (better) objective than skills that split along the second dimension. This same intuition that DIAYN seeks to minimize the border length between skills results in DIAYN preferring partitions that correspond to bottleneck states (see Figure~\ref{fig:gridworld-partition-b}).

\section{Experimental Details}
In our experiments, we use the same hyperparameters as those in~\citet{haarnoja2018soft}, with one notable exception. For the Q function, value function, and policy, we use neural networks with 300 hidden units instead of 128 units. We found that increasing the model capacity was necessary to learn many diverse skills. When comparing the ``skill initialization'' to the ``random initialization'' in Section~\ref{sec:controlling}, we use the same model architecture for both methods.
To pass skill $z$ to the Q function, value function, and policy, we simply concatenate $z$ to the current state $s_t$.
As in ~\citet{haarnoja2018soft}, epochs are 1000 episodes long. For all environments, episodes are at most 1000 steps long, but may be shorter. For example, the standard benchmark hopper environment terminates the episode once it falls over. Figures~\ref{fig:classic-control-hist} and~\ref{fig:maximizing-reward} show up to 1000 epochs, which corresponds to at most 1 million steps.
We found that learning was most stable when we scaled the maximum entropy objective ($\h[A \mid S, Z]$ in Eq.~\ref{eq:objective}) by $\alpha = 0.1$. We use this scaling for all experiments.

\subsection{Environments}
Most of our experiments used the following, standard RL environments~\citep{brockman2016openai}: HalfCheetah-v1, Ant-v1, Hopper-v1, MountainCarContinuous-v0, and InvertedPendulum-v1. The simple 2D navigation task used in Figures~\ref{fig:square} and~\ref{fig:hrl-point} was constructed as follows. The agent starts in the center of the unit box. Observations $s \in [0, 1]^2$ are the agent's position. Actions $a \in [-0.1, 0.1]^2$ directly change the agent's position. If the agent takes an action to leave the box, it is projected to the closest point inside the box.

The cheetah hurdle environment is a modification of HalfCheetah-v1, where we added boxes with shape $H = 0.25m, W = 0.1m, D = 1.0m$, where the width dimension is along the same axis as the cheetah's forward movement. We placed the boxes ever 3 meters, start at $x = -1m$.

The ant navigation environment is a modification of Ant-v1. To improve stability, we follow~\cite{pong2018temporal} and lower the gear ratio of all joints to 30. The goals are the corners of a square, centered at the origin, with side length of 4 meters: $[(2, 2), (2, -2), (-2, -2), (-2, 2), (2, 2)]$. The ant starts at the origin, and receives a reward of +1 when its center of mass is within 0.5 meters of the correct next goal. Each reward can only be received once, so the maximum possible reward is +5.

\subsection{Hierarchical RL Experiment}
\label{appendix:hrl}

For the 2D navigation experiment shown in Figure~\ref{fig:hrl-point}, we first learned a set of skills on the point environment. Next, we introduced a reward function $r_g(s) = -\|s - g\|_2^2$ penalizing the distance from the agent's state to some goal, and applied the hierarchical algorithm above. In this task, the DIAYN skills provided sufficient coverage of the state space that the hierarchical policy only needed to take a single action (i.e., choose a single skill) to complete the task.

\clearpage
\section{More Analysis of DIAYN Skills}
\label{appendix:more-analysis}

\subsection{Training Objectives}

\begin{figure}[h]
    \centering
    \centering
    \includegraphics[width=0.5 \linewidth]{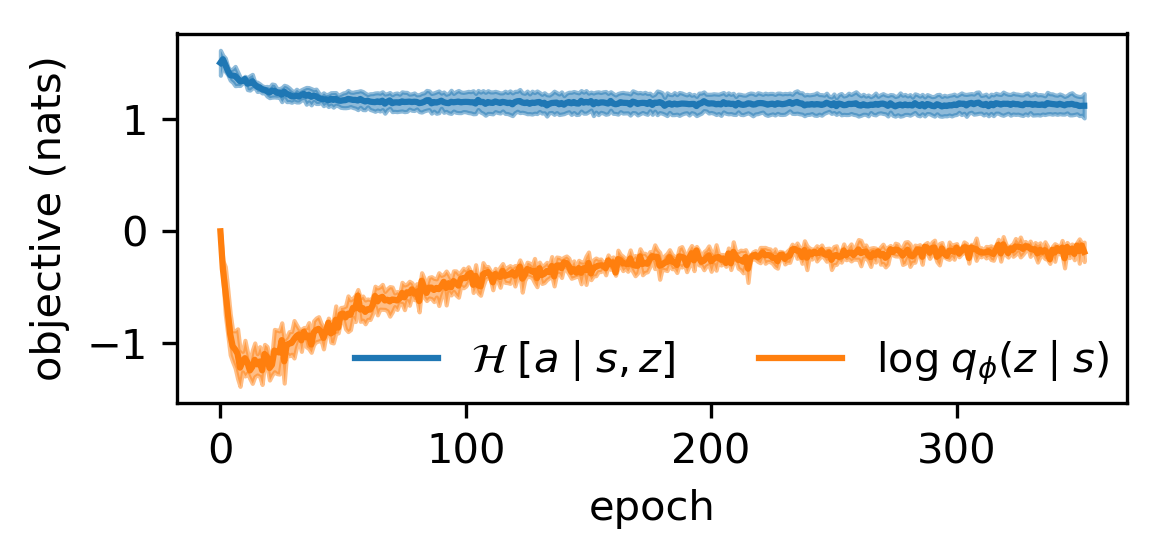}
    \caption{\textbf{Objectives}: We plot the two terms from our objective (Eq.~1) throughout training. While the entropy regularizer (blue) quickly plateaus, the discriminability term (orange) term continues to increase, indicating that our skills become increasingly diverse without collapsing to deterministic policies. This plot shows the mean and standard deviation across 5 seeds for learning 20 skills in half cheetah environment. Note that $\log_2(1/20) \approx -3$, setting a lower bound for  $\log q_{\phi}(z \mid s)$.}
    \label{fig:discriminator-loss}
\end{figure}
To provide further intuition into our approach,  Figure~\ref{fig:discriminator-loss} plots the two terms in our objective throughout training. Our skills become increasingly diverse throughout training without converging to deterministic policies.

\begin{figure}[h]
    \centering
    \includegraphics[width=0.7\linewidth]{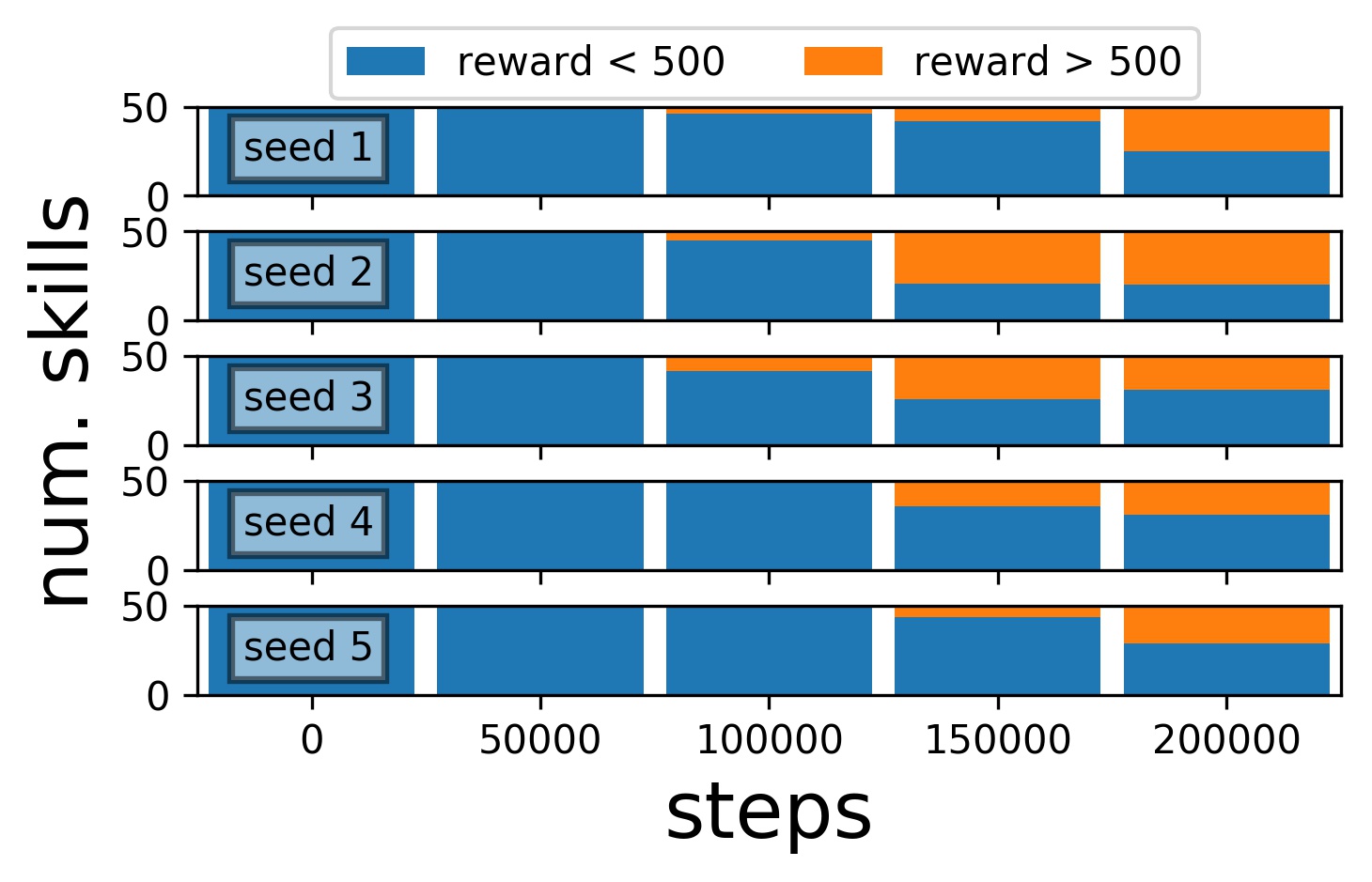}
    \caption{We repeated the experiment from Figure~\ref{fig:classic-control-hist} with 5 random seeds to illustrate the robustness of our method to random seed.}
    \label{fig:classic-control-hist-seeds}
\end{figure}
To illustrate the stability of DIAYN to random seed, we repeated the experiment in Figure~\ref{fig:classic-control-hist} for 5 random seeds. Figure~\ref{fig:classic-control-hist-seeds} illustrates that the random seed has little effect on the training dynamics.

\subsection{Effect of Entropy Regularization}

\begin{question}
Does entropy regularization lead to more diverse skills?
\end{question}

\begin{wrapfigure}[8]{r}{0.5\textwidth}
    \vspace{-1.5em}
    \begin{subfigure}[b]{0.33\linewidth}
      \centering\includegraphics[width=\textwidth]{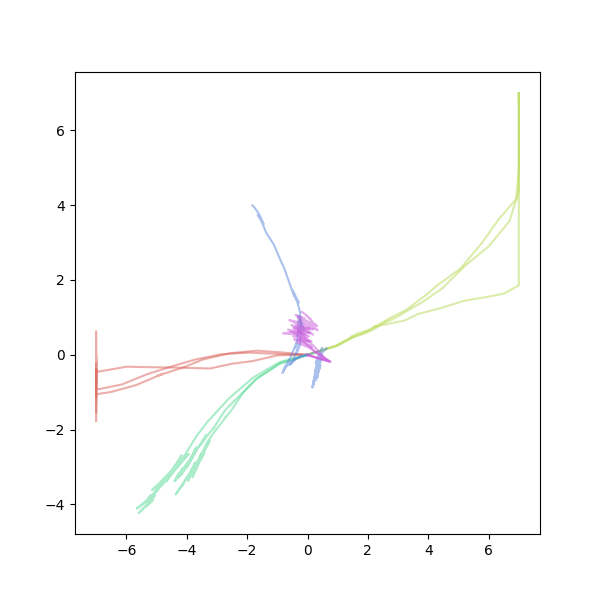}
      \vspace{-1em}
      \caption*{$\alpha = 0.01$}
    \end{subfigure}%
    \begin{subfigure}[b]{0.33\linewidth}
    \centering\includegraphics[width=\textwidth]{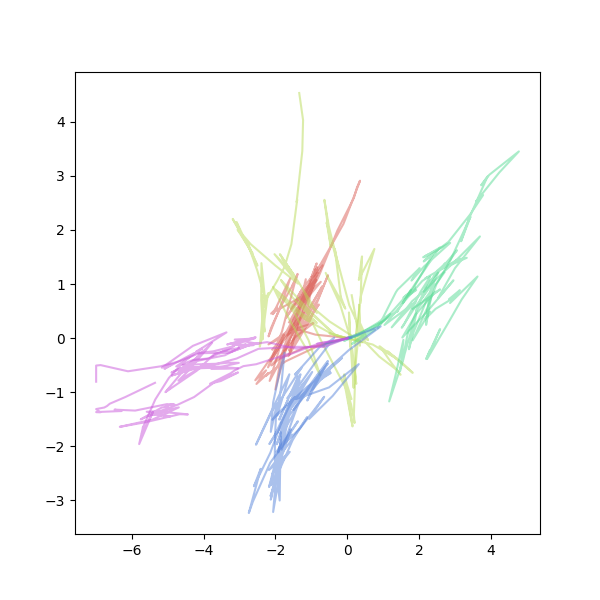}
      \vspace{-1em}
      \caption*{$\alpha = 1$}
   \end{subfigure}%
   \begin{subfigure}[b]{0.33\linewidth}
      \centering\includegraphics[width=\textwidth]{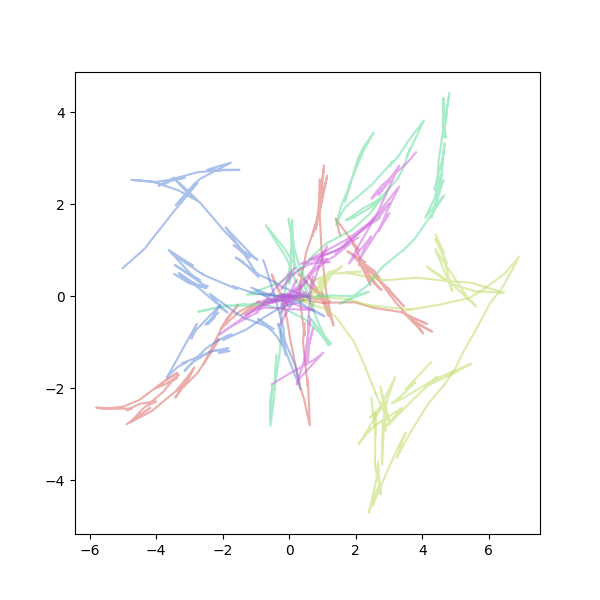}
      \vspace{-1em}
      \caption*{$\alpha = 10$}
    \end{subfigure}%
\end{wrapfigure}

To answer this question, we apply our method to a 2D point mass. The agent controls the orientation and forward velocity of the point, with is confined within a 2D box.
We vary the entropy regularization $\alpha$, with larger values of $\alpha$ corresponding to policies with more stochastic actions. 
With small $\alpha$, we learn skills that move large distances in different directions but fail to explore large parts of the state space. Increasing $\alpha$ makes the skills visit a more diverse set of states, which may help with exploration in complex state spaces. It is difficult to discriminate skills when $\alpha$ is further increased.

\subsection{Distribution over Task Reward}
\label{appendix:task-reward}

\begin{figure}[ht]
    \centering
    \begin{subfigure}[b]{0.3\linewidth}
        \centering\includegraphics[width=\textwidth]{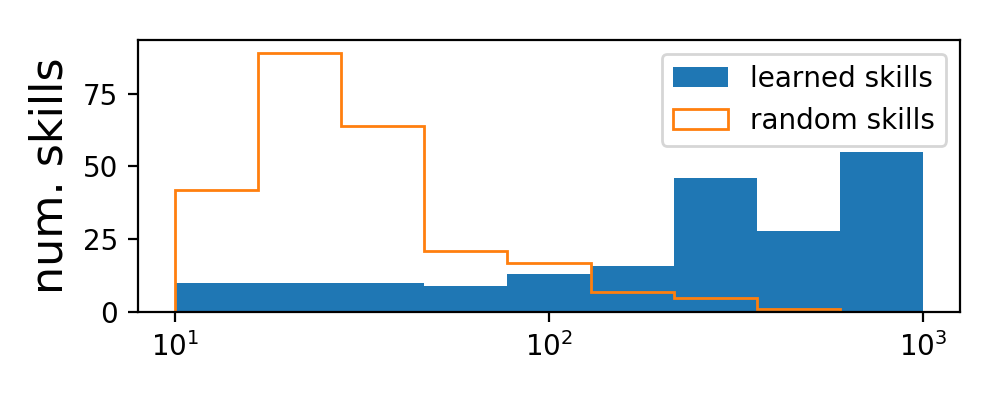}
        \vspace{-1em}
        \caption{Hopper}
    \end{subfigure}
    \begin{subfigure}[b]{0.3\linewidth}
        \centering\includegraphics[width=\textwidth]{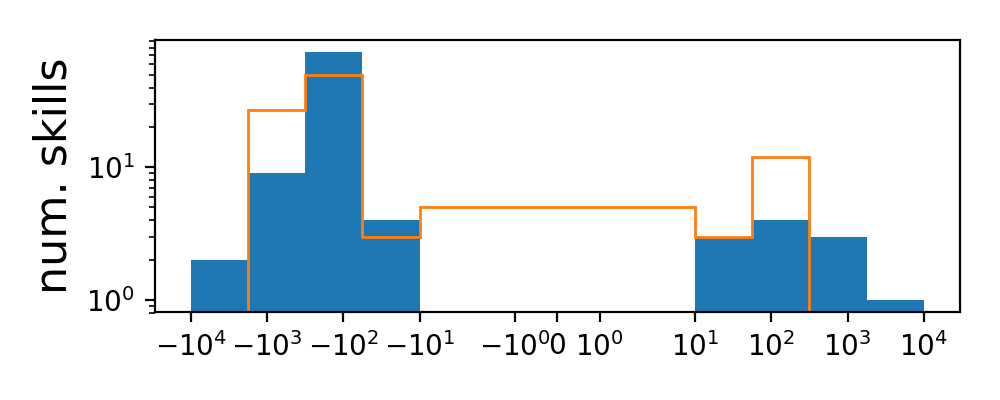}
        \vspace{-1em}
        \caption{Half Cheetah}
    \end{subfigure}
    \begin{subfigure}[b]{0.3\linewidth}
        \centering\includegraphics[width=\textwidth]{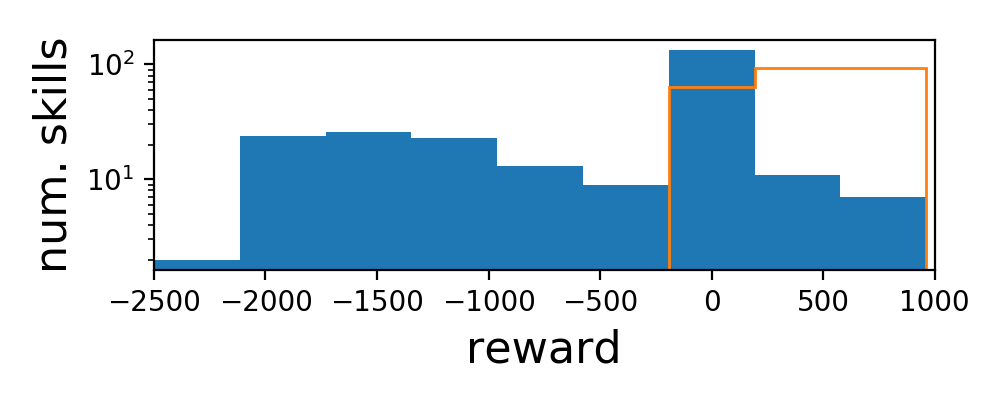}
        \vspace{-1em}
        \caption{Ant}
    \end{subfigure}
    \caption{\textbf{Task reward of skills learned without reward}: While our skills are learned without the task reward function, we evaluate each with the task reward function for analysis. The wide range of rewards shows the diversity of the learned skills. In the hopper and half cheetah tasks, many skills achieve large task reward, despite not observing the task reward during training. As discussed in prior work~\citep{henderson2017deep, duan2016benchmarking}, standard model-free algorithms trained directly on the task reward converge to scores of 1000 - 3000 on hopper, 1000 - 5000 on cheetah, and 700 - 2000 on ant. \label{fig:mujoco-skill-reward}}
\end{figure}

In Figure~\ref{fig:mujoco-skill-reward}, we take the skills learned without any rewards, and evaluate each of them on the standard benchmark reward function. We compare to random (untrained) skills. The wide distribution over rewards is evidence that the skills learned are diverse.
For hopper, some skills hop or stand for the entire episode, receiving a reward of at least 1000. Other skills aggressively hop forwards or dive backwards, and receive rewards between 100 and 1000. Other skills fall over immediately and receive rewards of less than 100.
The benchmark half cheetah reward includes a control penalty for taking actions. Unlike random skills, learned skills rarely have task reward near zero, indicating that all take actions to become distinguishable. Skills that run in place, flop on their nose, or do backflips receive reward of -100. Skills that receive substantially smaller reward correspond to running quickly backwards, while skills that receive substantially larger reward correspond to running forward.
Similarly, the benchmark ant task reward includes both a control penalty and a survival bonus, so random skills that do nothing receive a task reward near 1000. While no single learned skill learns to run directly forward and obtain a task reward greater than 1000, our learned skills run in different patterns to become discriminable, resulting in a lower task reward.

\subsection{Exploration}
\label{appendix:exploration}

\begin{question}
Does DIAYN explore effectively in complex environments?
\end{question}

We apply DIAYN to three standard RL benchmark environments: half-cheetah, hopper, and ant. In all environments, we learn diverse locomotion primitives, as shown in Figure~\ref{fig:eye-candy}. Despite never receiving any reward, the half cheetah and hopper learn skills that move forward and achieve large task reward on the corresponding RL benchmarks, which all require them to move forward at a fast pace. Half cheetah and hopper also learn skills that move backwards, corresponding to receiving a task reward much smaller than what a random policy would receive.
Unlike hopper and half cheetah, the ant is free to move in the XY plane. While it learns skills that move in different directions, most skills move in arcs rather than straight lines, meaning that we rarely learn a single skill that achieves large task reward on the typical task of running forward. In the appendix, we visualize the objective throughout training.

\begin{figure}[ht]
    \centering
    \includegraphics[width=\linewidth]{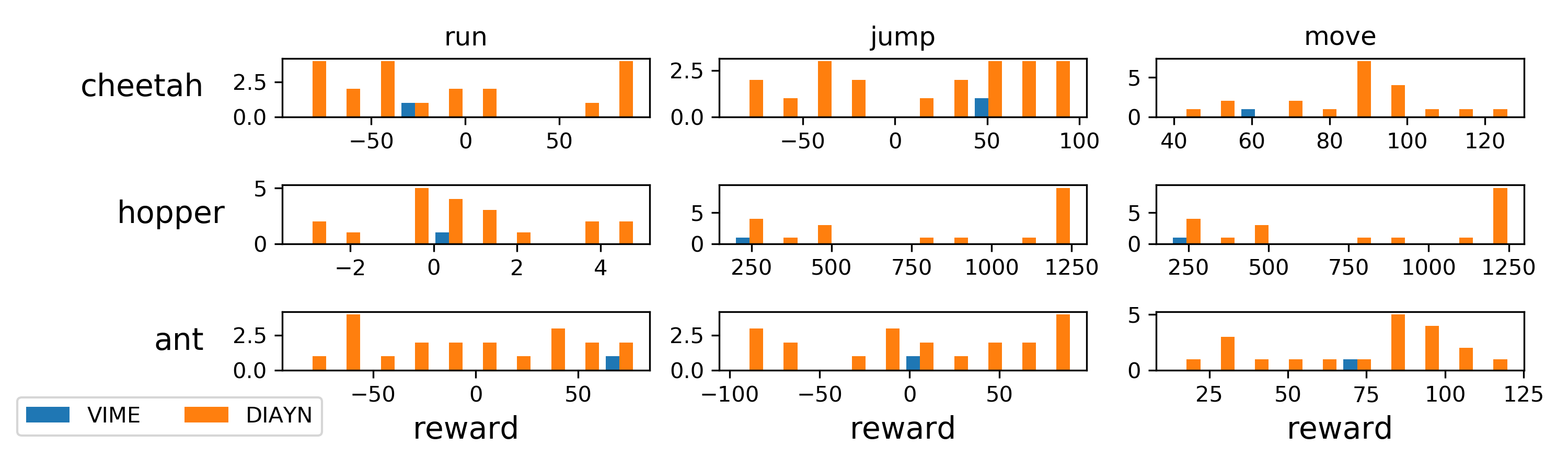}
    \caption{\textbf{Exploration}: We take DIAYN skills learned without a reward function, and evaluate on three natural reward functions: running, jumping, and moving away from the origin. For all tasks, DIAYN learns some skills that perform well. In contrast, a single policy that maximizes an exploration bonus (VIME) performs poorly on all tasks.}
    \label{fig:exploration}
\end{figure}

In Figure~\ref{fig:exploration}, we evaluate all skills on three reward functions: running (maximize X coordinate), jumping (maximize Z coordinate) and moving (maximize L2 distance from origin).
For each skill, DIAYN learns some skills that achieve high reward. We compare to single policy trained with a pure exploration objective (VIME~\citep{houthooft2016vime}).  Whereas previous work (e.g.,~\cite{pathak2017curiosity, bellemare2016unifying, houthooft2016vime}) finds a single policy that explores well, DIAYN optimizes a \emph{collection} of policies, which enables more diverse exploration.

\section{Learning $p(z)$}
 We used our method as a starting point when comparing to VIC~\citep{gregor2016variational} in Section~\ref{sec:controlling}. While $p(z)$ is fixed in our method, we implement VIC by learning $p(z)$.
 In this section, we describe how we learned $p(z)$, and show the effect of learning $p(z)$ rather than leaving it fixed.

\subsection{How to Learn $p(z)$}
We choose $p(z)$ to optimize the following objective, where $p_z(s)$ is the distribution over states induced by skill $s$:
\begin{align*}
    \h[S, Z] &= \h[Z] - \h[Z \mid S] \\
             &= \sum_z -p(z) \log p(z) + \sum_z \E_{s \sim p_z(s)} \left[ \log p(z \mid s) \right] \\
             &= \sum_z p(z) \left ( \E_{s \sim p_z(s)} \left[ \log p(z \mid s) \right] - \log p(z) \right)
\end{align*}
For clarity, we define $p_z^t(s)$ as the distribution over states induced by skill $z$ at epoch $t$, and define $\ell_t(z)$ as an approximation of $\E[\log p(z \mid s)]$ using the policy and discriminator from epoch~$t$:
\begin{equation*}
    \ell_{t}(z) \triangleq \E_{s \sim p_z^t(s)}[\log q_t(z \mid s)]
\end{equation*}

Noting that $p(z)$ is constrained to sum to 1, we can optimize this objective using the method of Lagrange multipliers. The corresponding Lagrangian is
\begin{equation*}
    \mathcal{L}(p) = \sum_z p(z) \left ( \ell_t(z) - \log p(z) \right) + \lambda \left ( \sum_z p(z) - 1 \right)
\end{equation*}
whose derivative is
\begin{align*}
    \frac{\partial \mathcal{L}}{\partial p(z)} &= \cancel{p(z)} \left( \frac{-1}{\cancel{p(z)}} \right) + \ell_t(z) - \log p(z) + \lambda \\
                                               &= \ell_t(z) - \log p(z) + \lambda - 1
\end{align*}
Setting the derivative equal to zero, we get
\begin{equation*}
    \log p(z) = \ell_t(z) + \lambda - 1
\end{equation*}
and finally arrive at
\begin{equation*}
    p(z) \propto e^{\ell_t(z)}
\end{equation*}

\subsection{Effect of Learning $p(z)$}
\label{appendix:entropy}

\begin{figure}[ht]
    \centering
    \includegraphics[width=\textwidth]{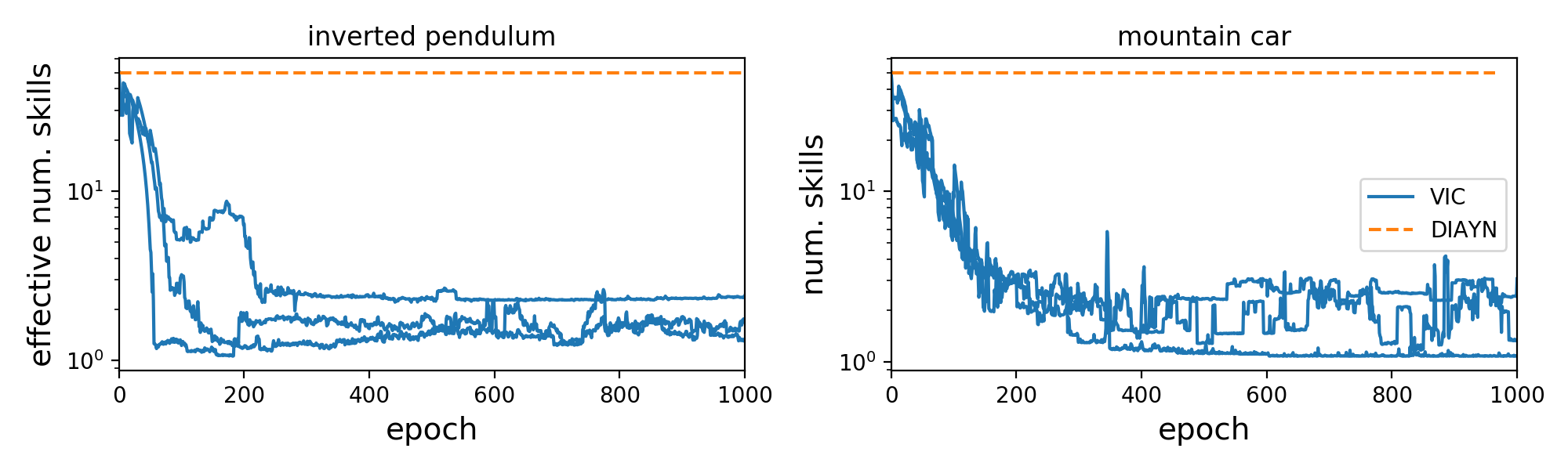}
    \caption{\textbf{Effect of learning $p(z)$}: We plot the effective number of skills that are sampled from the skill distribution $p(z)$ throughout training. Note how learning $p(z)$ greatly reduces the effective number on inverted pendulum and mountain car. We show results from 3 random seeds for each environment.}
    \label{fig:z-entropy}
\end{figure}

In this section, we briefly discuss the effect of learning $p(z)$ rather than leaving it fixed. To study the effect of learning $p(z)$, we compared the entropy of $p(z)$ throughout training. When $p(z)$ is fixed, the entropy is a constant \mbox{($\log (50) \approx 3.9$)}. To convert nats to a more interpretable quantity, we compute the effective number of skills by exponentiation the entropy:
\begin{equation*}
\text{effective num. skills} \triangleq e^{\h[Z]}
\end{equation*}
Figure~\ref{fig:z-entropy} shows the effective number of skills for half cheetah, inverted pendulum, and mountain car. Note how the effective number of skills drops by a factor of 10x when we learn $p(z)$. This observation supports our claim that learning $p(z)$ results in learning fewer diverse skills.

\section{Visualizing Learned Skills}

\subsection{Classic Control Tasks}
\label{appendix:classic-control-skills}

\begin{figure}[ht]
    \centering
    \begin{subfigure}[b]{0.49\linewidth}
        \includegraphics[width=\textwidth]{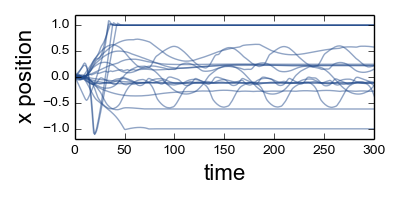}
        \vspace{-1em}
        \caption{Inverted Pendulum \label{fig:inverted-pendulum-trace}}
    \end{subfigure}
    \begin{subfigure}[b]{0.49\linewidth}
        \includegraphics[width=\textwidth]{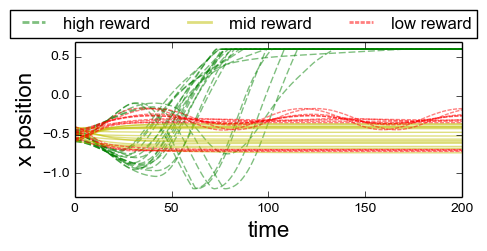}
        \vspace{-1em}
        \caption{Mountain Car \label{fig:mountain-car-trace}}
    \end{subfigure}
    \caption{\textbf{Visualizing Skills}: For every skill, we collect one trajectory and plot the agent's X coordinate across time. For inverted pendulum (top), we only plot skills that balance the pendulum. Note that among balancing skills, there is a wide diversity of balancing positions, control frequencies, and control magnitudes. For mountain car (bottom), we show skills that achieve larger reward (complete the task), skills with near-zero reward, and skills with very negative reward. Note that skills that solve the task (green) employ varying strategies.}
    \label{fig:classic-control-trace}
\end{figure}

In this section, we visualize the skills learned for inverted pendulum and mountain car without a reward. Not only does our approach learn skills that solve the task without rewards, it learns multiple distinct skills for solving the task. Figure~\ref{fig:classic-control-trace} shows the X position of the agent across time, within one episode. For inverted pendulum (Fig.~\ref{fig:inverted-pendulum-trace}), we plot only skills that solve the task. Horizontal lines with different~X coordinates correspond to skills balancing the pendulum at different positions along the track. The periodic lines correspond to skills that oscillate back and forth while balancing the pendulum. Note that skills that oscillate have different~X positions, amplitudes, and periods. For mountain car (Fig.~\ref{fig:mountain-car-trace}), skills that climb the mountain employ a variety of strategies for to do so. Most start moving backwards to gather enough speed to summit the mountain, while others start forwards, then go backwards, and then turn around to summit the mountain. Additionally, note that skills differ in when the turn around and in their velocity (slope of the green lines).

\subsection{Simulated Robot Tasks}

\begin{figure}
    \centering
      \centering\includegraphics[width=\textwidth]{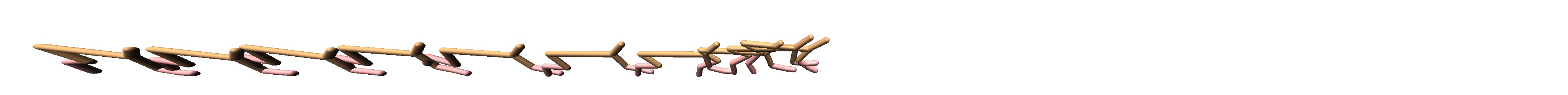}
      \centering\includegraphics[width=\textwidth]{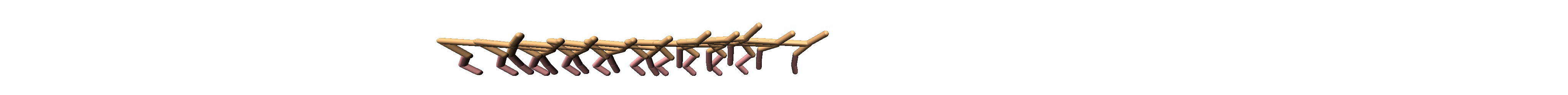}
      \centering\includegraphics[width=\textwidth]{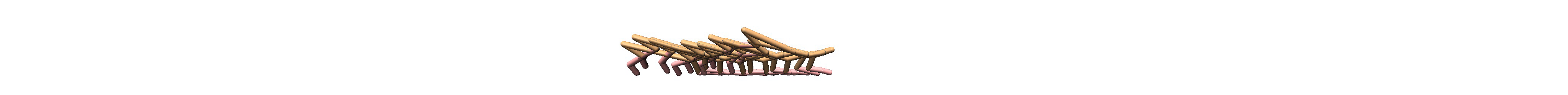}
      \centering\includegraphics[width=\textwidth]{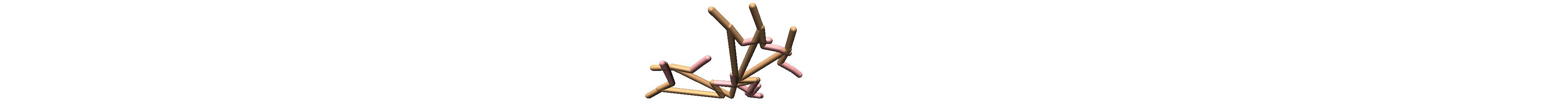}
      \centering\includegraphics[width=\textwidth]{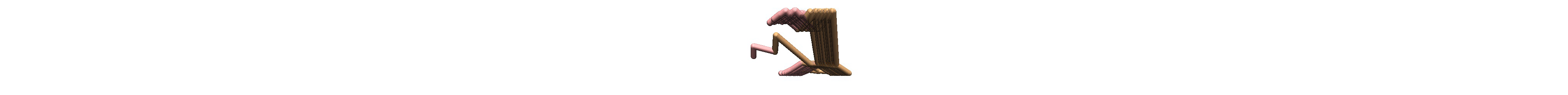}
      \centering\includegraphics[width=\textwidth]{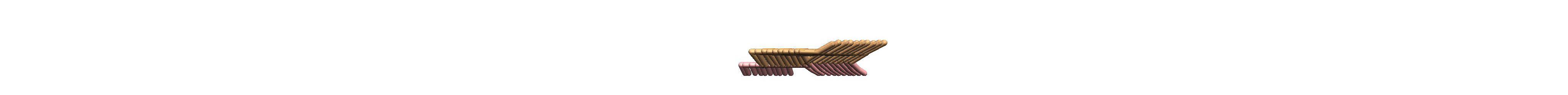}
      \centering\includegraphics[width=\textwidth]{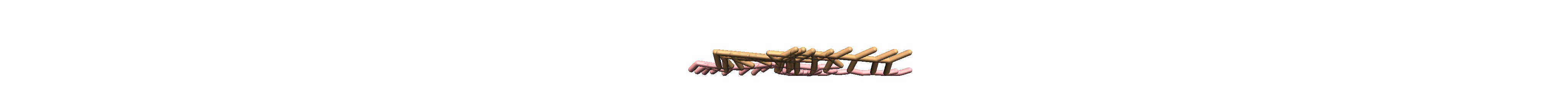} 
      \centering\includegraphics[width=\textwidth]{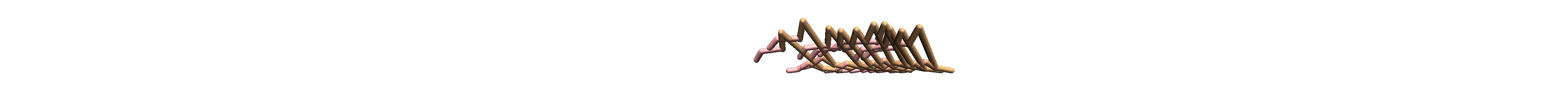}
      \centering\includegraphics[width=\textwidth]{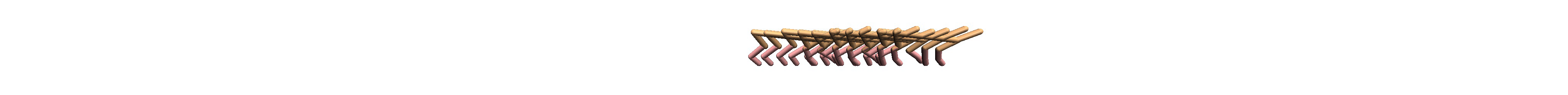} 
      \centering\includegraphics[width=\textwidth]{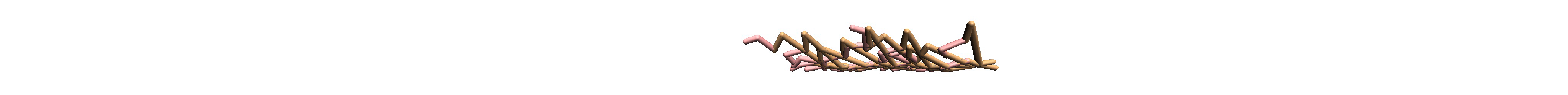}
      \centering\includegraphics[width=\textwidth]{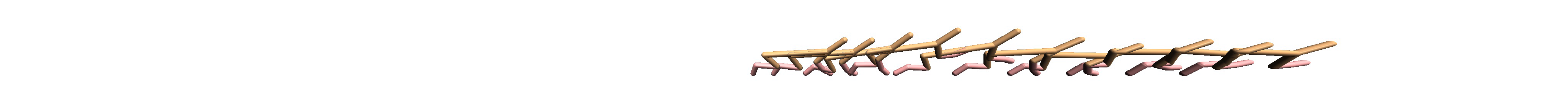}
      \centering\includegraphics[width=\textwidth]{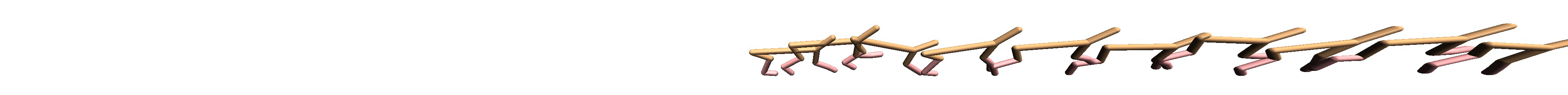}
   \caption{\textbf{Half cheetah skills}: We show skills learned by half-cheetah with no reward. \label{fig:more-half-cheetah-eye-candy}}
\end{figure}

\begin{figure}
    \centering
    \centering\includegraphics[width=\textwidth]{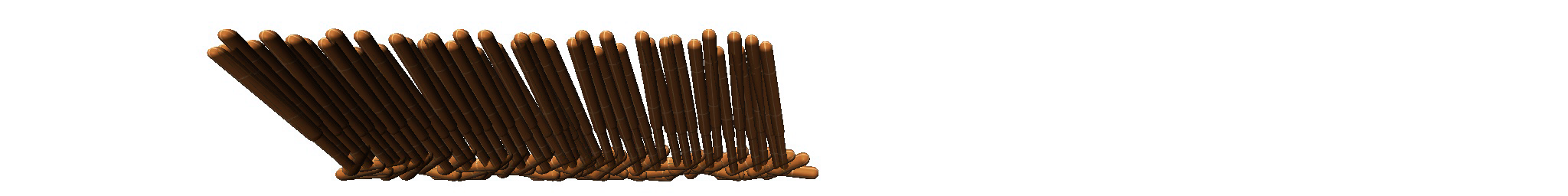}
    \centering\includegraphics[width=\textwidth]{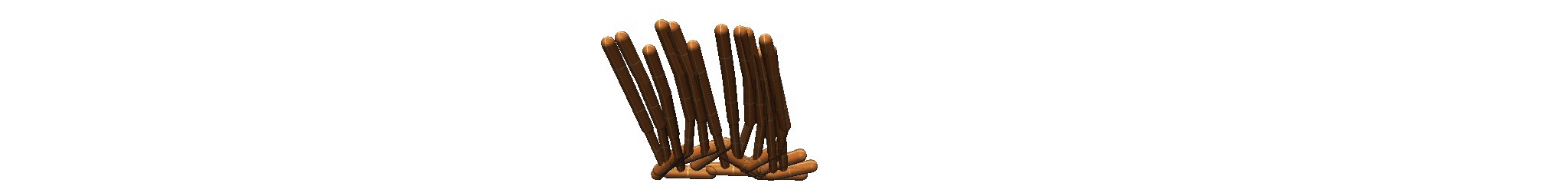}
    \centering\includegraphics[width=\textwidth]{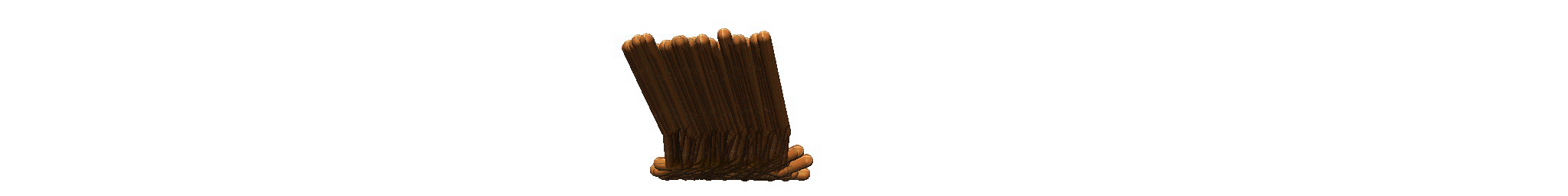}
    \centering\includegraphics[width=\textwidth]{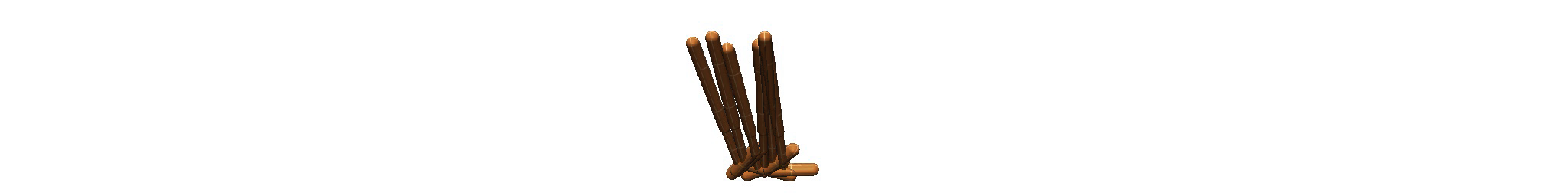}
    \centering\includegraphics[width=\textwidth]{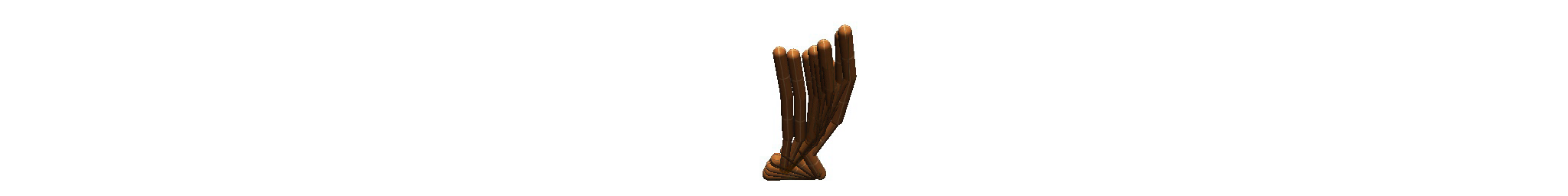}
    \centering\includegraphics[width=\textwidth]{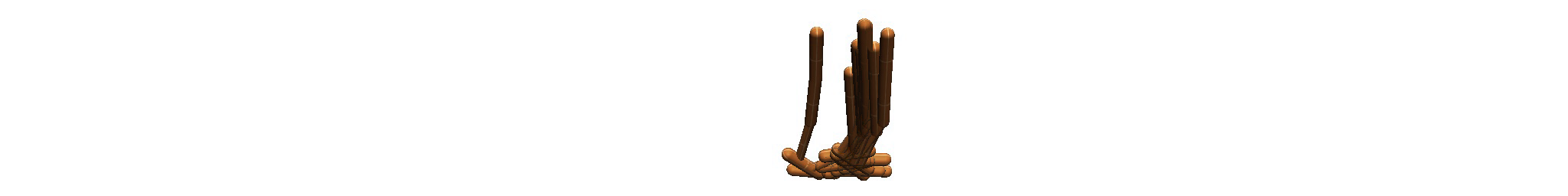}
    \centering\includegraphics[width=\textwidth]{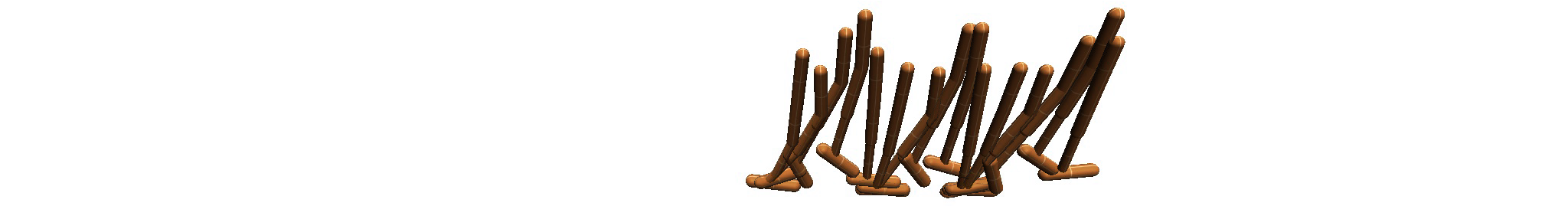} 
    \centering\includegraphics[width=\textwidth]{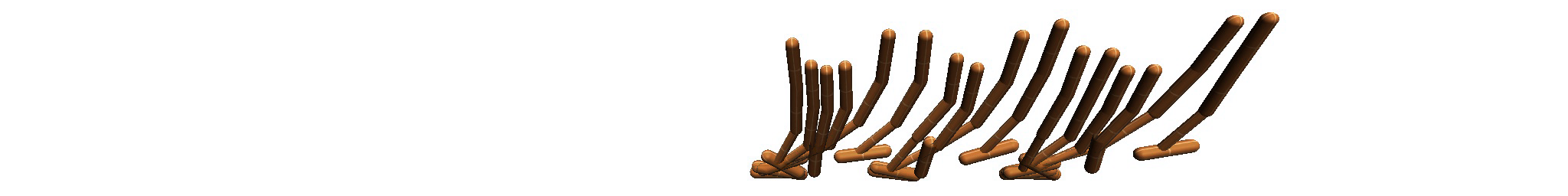}
    \centering\includegraphics[width=\textwidth]{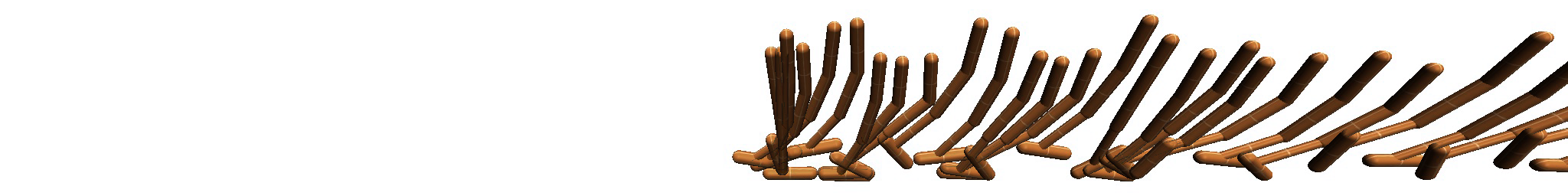}
   \caption{\textbf{Hopper Skills}: We show skills learned by hopper with no reward. \label{fig:more-hopper-eye-candy}}
\end{figure}

\begin{figure}
    \centering
    \begin{subfigure}[b]{0.19\linewidth}
      \centering\includegraphics[width=\textwidth]{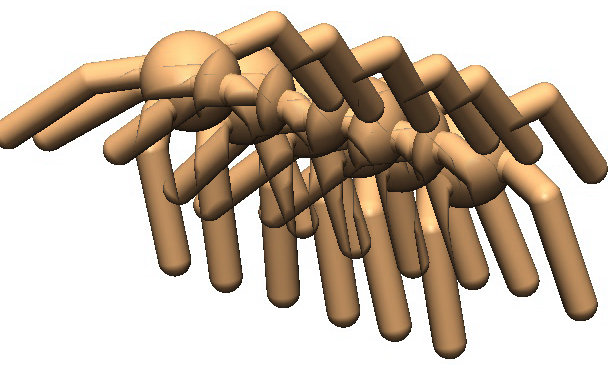}
    \end{subfigure}
    \begin{subfigure}[b]{0.19\linewidth}
      \centering\includegraphics[width=\textwidth]{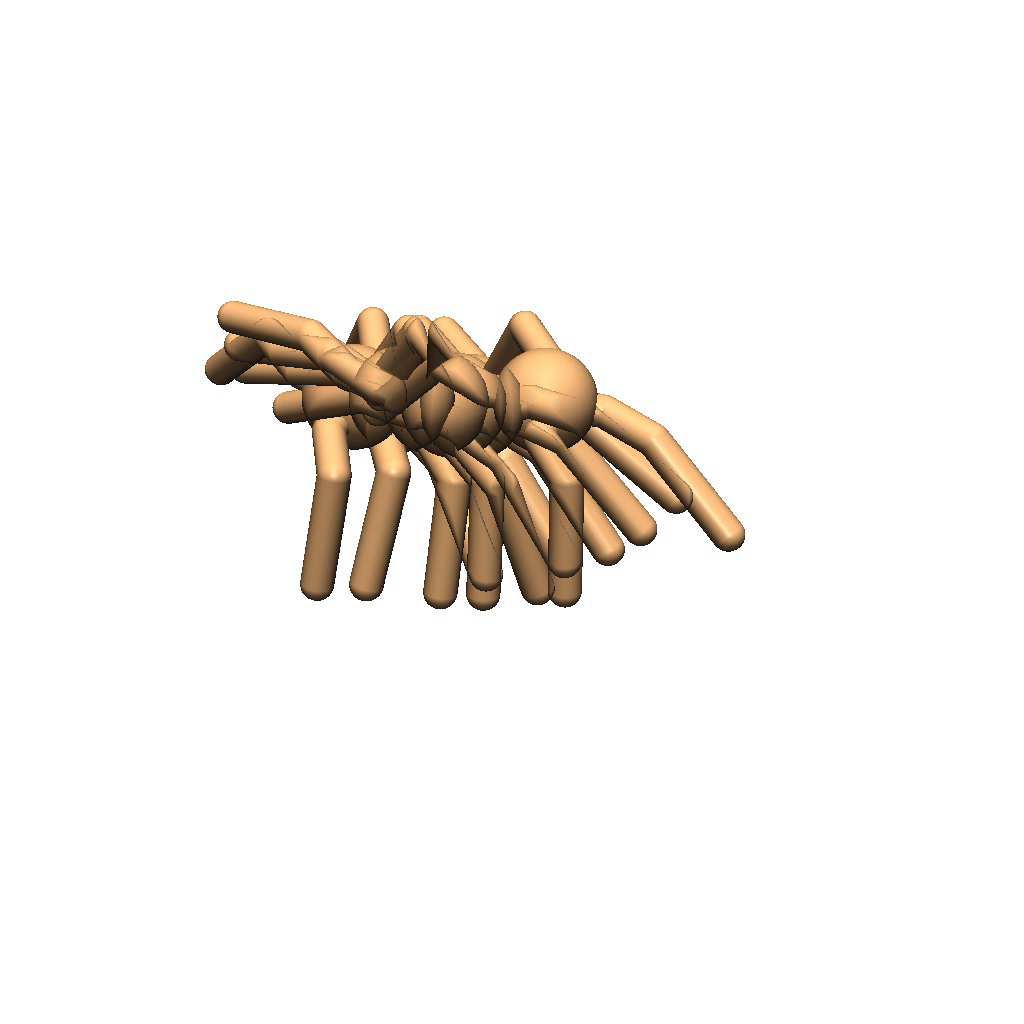}
    \end{subfigure}
    \begin{subfigure}[b]{0.19\linewidth}
      \centering\includegraphics[width=\textwidth]{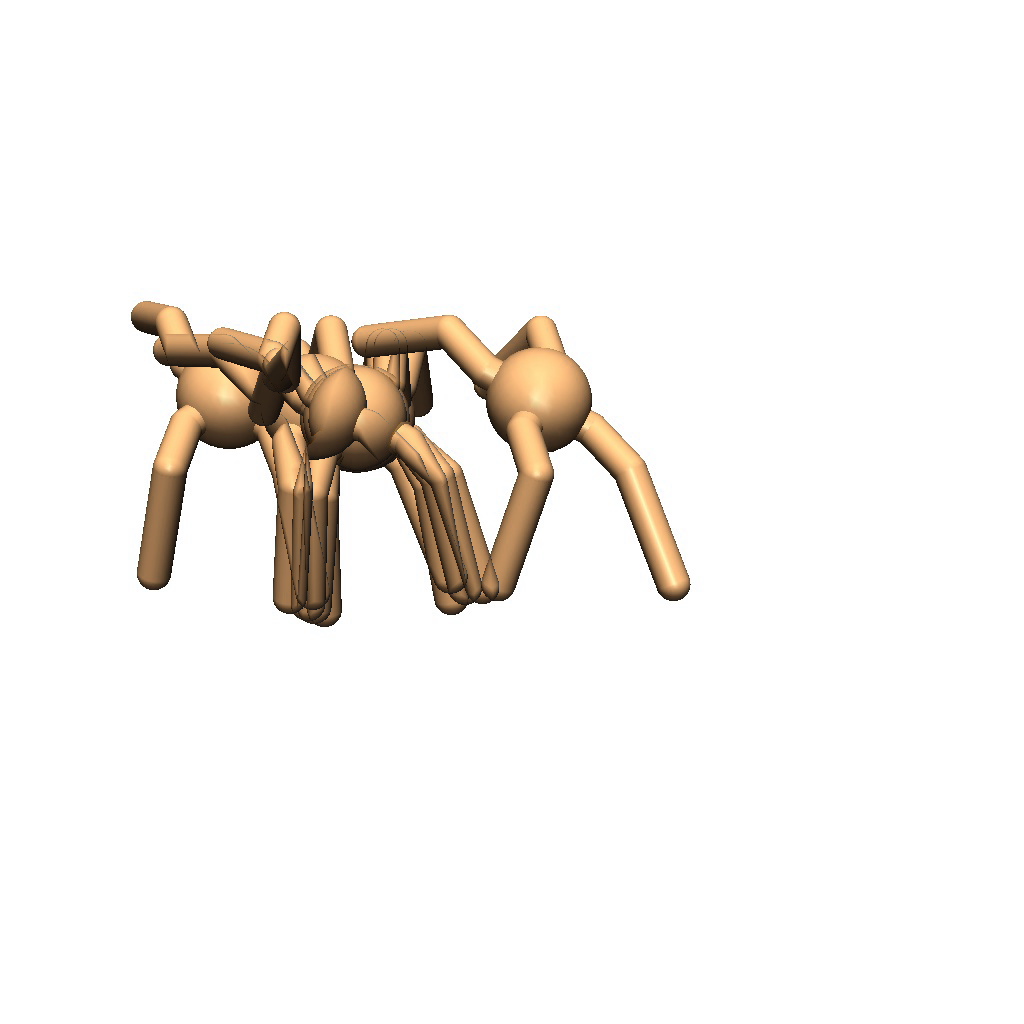}
    \end{subfigure}
    \begin{subfigure}[b]{0.19\linewidth}
      \centering\includegraphics[width=\textwidth]{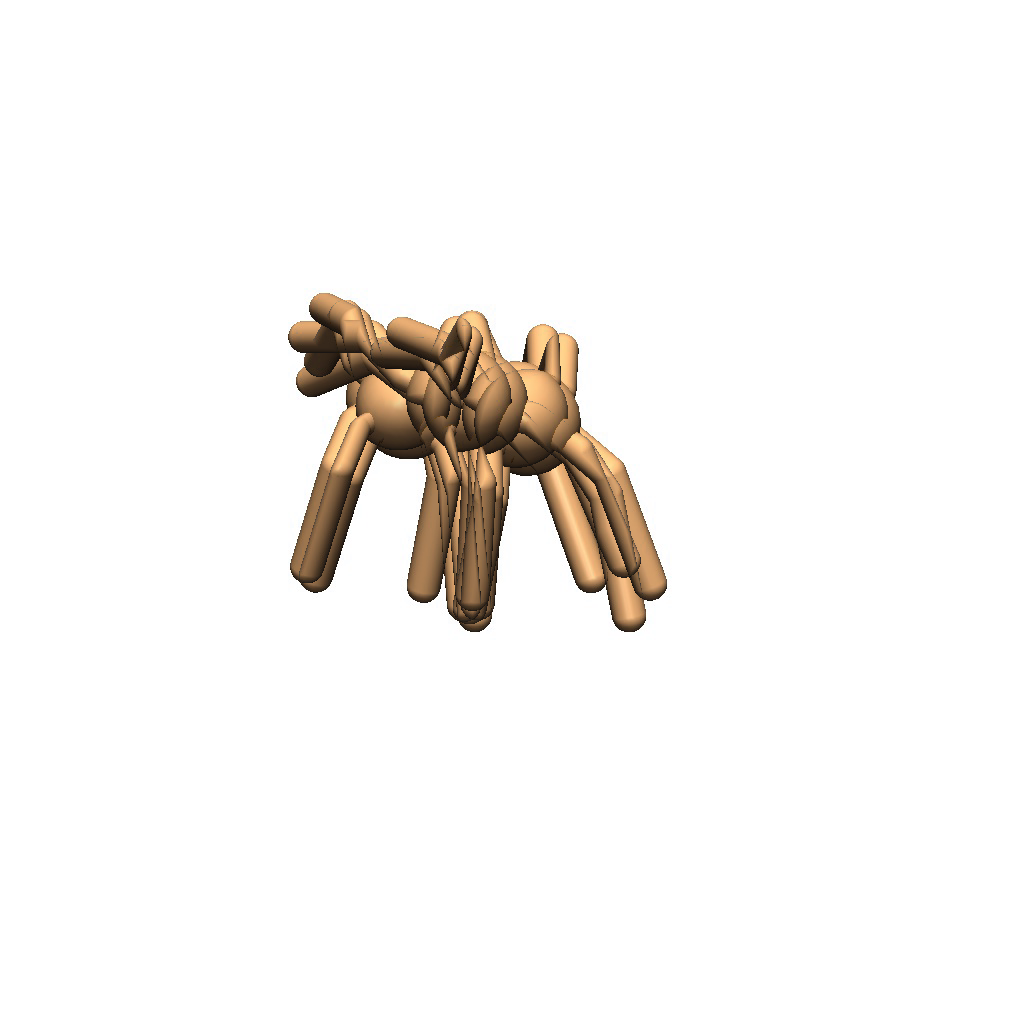}
    \end{subfigure}
    \begin{subfigure}[b]{0.19\linewidth}
      \centering\includegraphics[width=\textwidth]{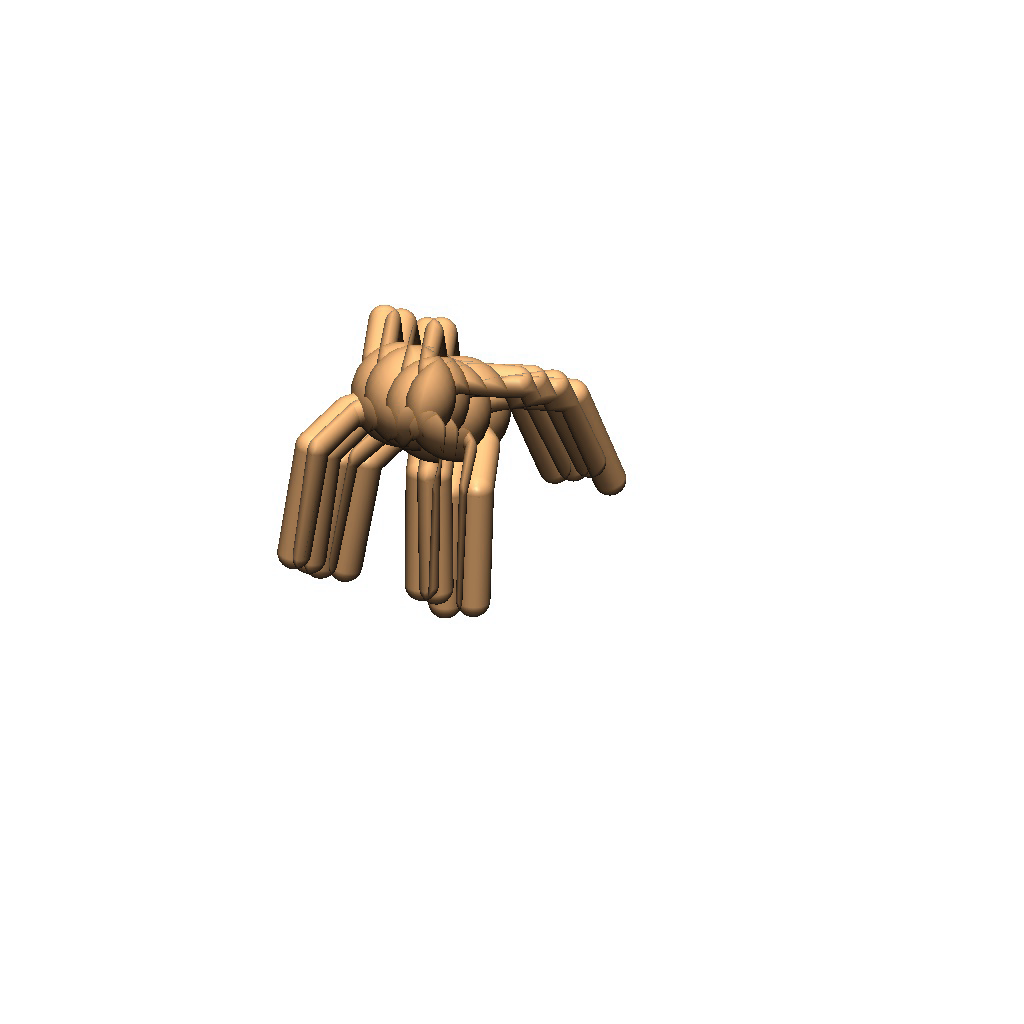}
    \end{subfigure}
    
    \begin{subfigure}[b]{0.19\linewidth}
      \centering\includegraphics[width=\textwidth]{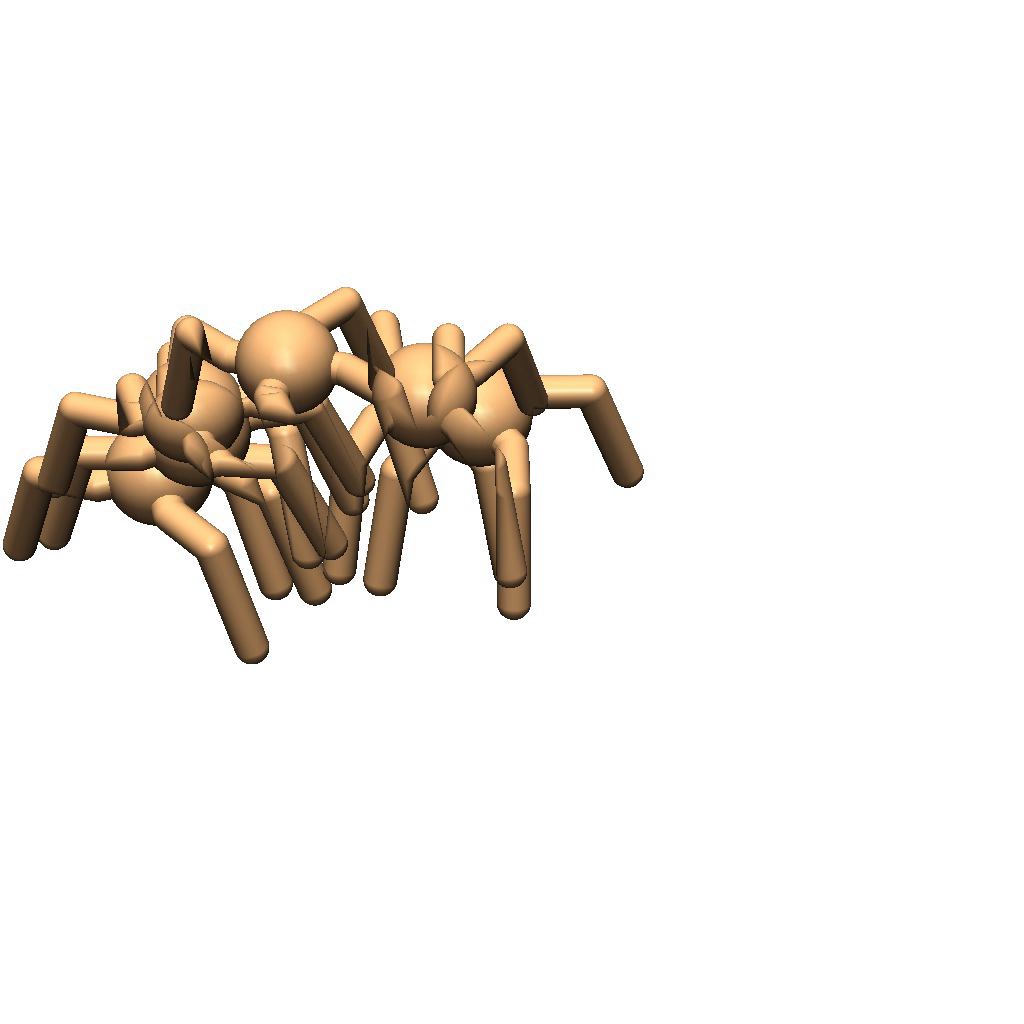}
    \end{subfigure}
    \begin{subfigure}[b]{0.19\linewidth}
      \centering\includegraphics[width=\textwidth]{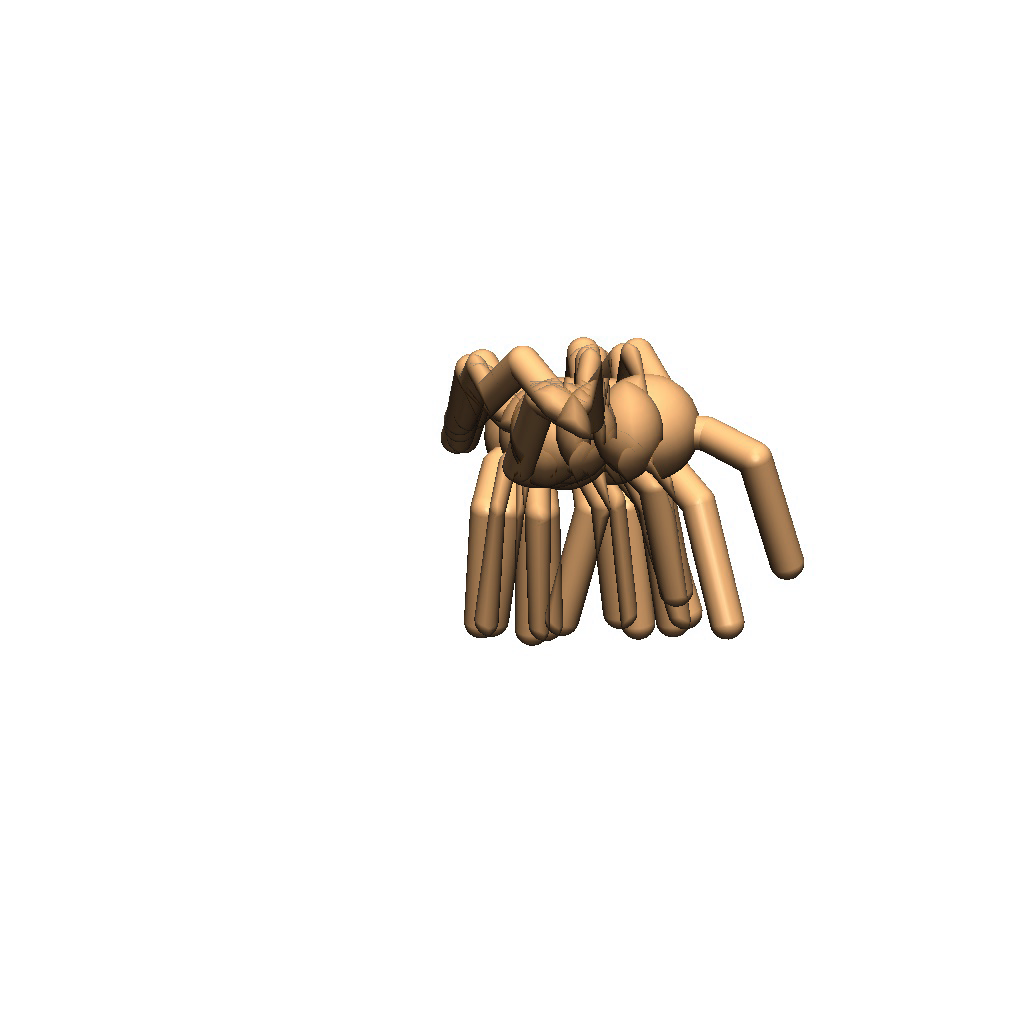}
    \end{subfigure}
    \begin{subfigure}[b]{0.19\linewidth}
      \centering\includegraphics[width=\textwidth]{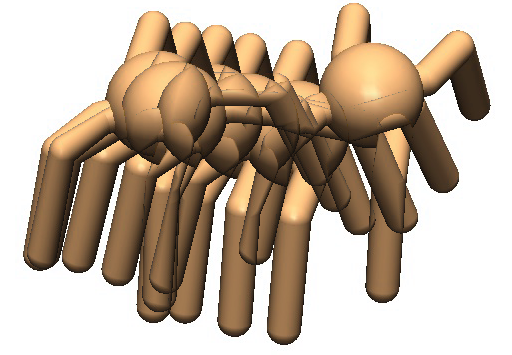}
    \end{subfigure}
    \begin{subfigure}[b]{0.19\linewidth}
      \centering\includegraphics[width=\textwidth]{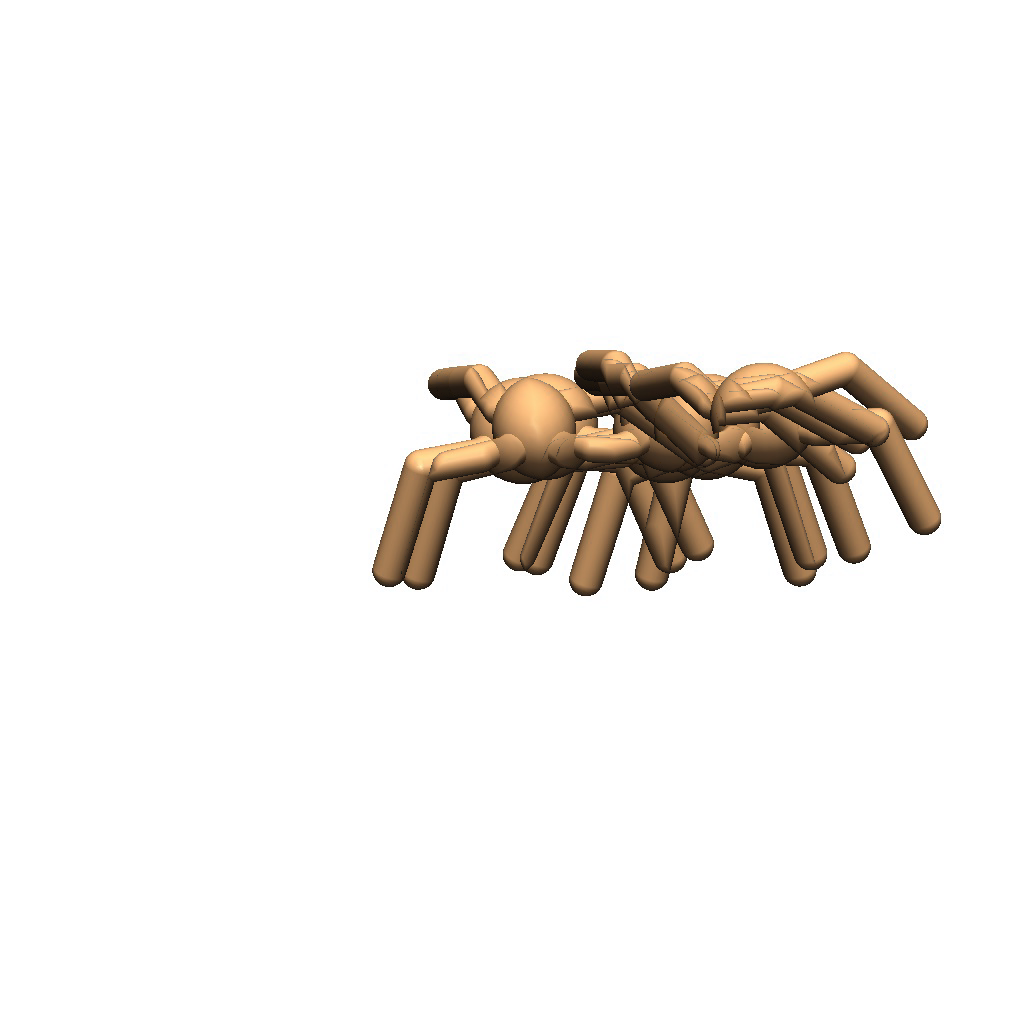}
    \end{subfigure}
    
    \begin{subfigure}[b]{0.19\linewidth}
      \centering\includegraphics[width=\textwidth]{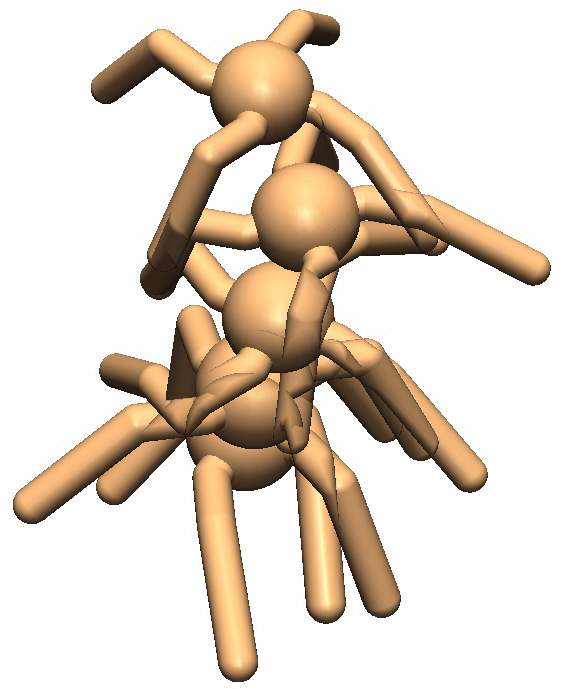}
    \end{subfigure}
    \begin{subfigure}[b]{0.19\linewidth}
      \centering\includegraphics[width=\textwidth]{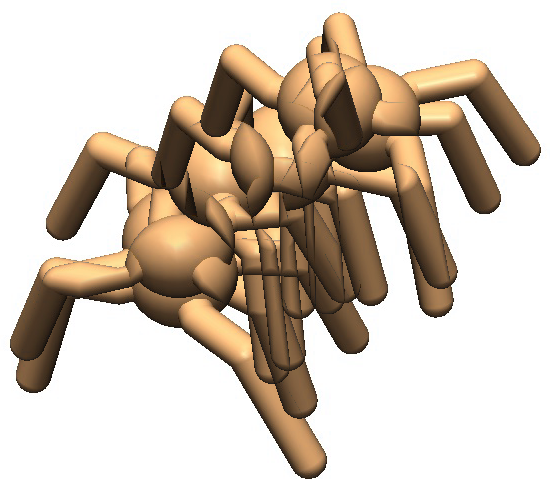}
    \end{subfigure}
    \begin{subfigure}[b]{0.19\linewidth}
      \centering\includegraphics[width=\textwidth]{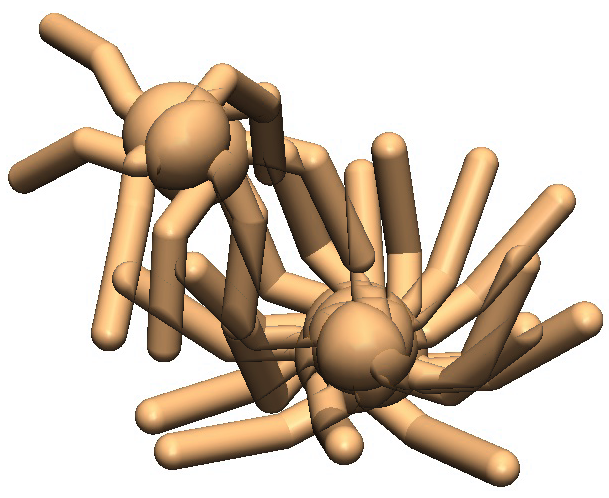}
    \end{subfigure}
    \begin{subfigure}[b]{0.19\linewidth}
      \centering\includegraphics[width=\textwidth]{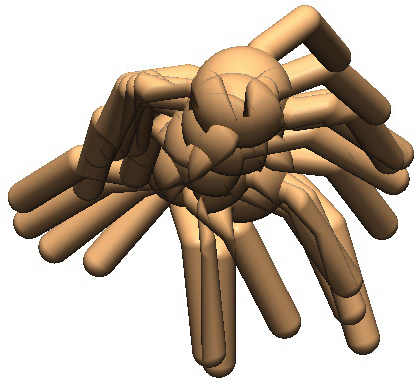}
    \end{subfigure}
    \caption{\textbf{Ant skills}: We show skills the ant learns without any supervision. Ant learns \emph{(top row)} to move right, \emph{(middle row)} to move left, \emph{(bottom row, left to right)} to move up, to move down, to flip on its back, and to rotate in place.}
    \label{fig:more-ant-eye-candy}
\end{figure}

Figures~\ref{fig:more-half-cheetah-eye-candy},~\ref{fig:more-hopper-eye-candy}, and~\ref{fig:more-ant-eye-candy} show more skills learned \emph{without reward}.

\clearpage
\section{Imitation Learning}
\label{appendix:imitation}

Given the expert trajectory, we use our learned discriminator to estimate which skill was most likely to have generated the trajectory:
\begin{equation*}
    \hat{z} = \argmax_z \Pi_{s_t \in \tau^*} q_{\phi}(z \mid s_t) \label{eq:imitation-2}
\end{equation*}
As motivation for this optimization problem, note that each skill induces a distribution over states, $p^z \triangleq p(s \mid z)$. We use $p^*$ to denote the distribution over states for the expert policy. With a fixed prior distribution $p(z)$ and a perfect discriminator $q_{\phi}(z \mid s) = p(z \mid s)$, we have \mbox{$p(s \mid z) \propto q_{\phi}(z \mid s)$} as a function of $z$.
Thus, Equation~\ref{eq:imitation-2} is an M-projection of the expert distribution over states onto the family of distributions over states, $\mathcal{P} = \{ p^z \}$:
\begin{equation}
    \argmin_{p^z \in \mathcal{P}} D(p^* \mid\mid p^z)
    \label{eq:projection}
\end{equation}
For clarity, we omit a constant that depends only on $p^*$. Note that the use of an \mbox{M-projection}, rather than an \mbox{I-projection}, helps guarantee that the retrieved skill will visit all states that the expert visits~\citep{christopher2016pattern}. In our experiments, we solve Equation~\ref{eq:projection} by simply iterating over skills.

\subsection{Imitation Learning Experiments}

\begin{figure}
    \centering
    \includegraphics[width=0.5 \textwidth]{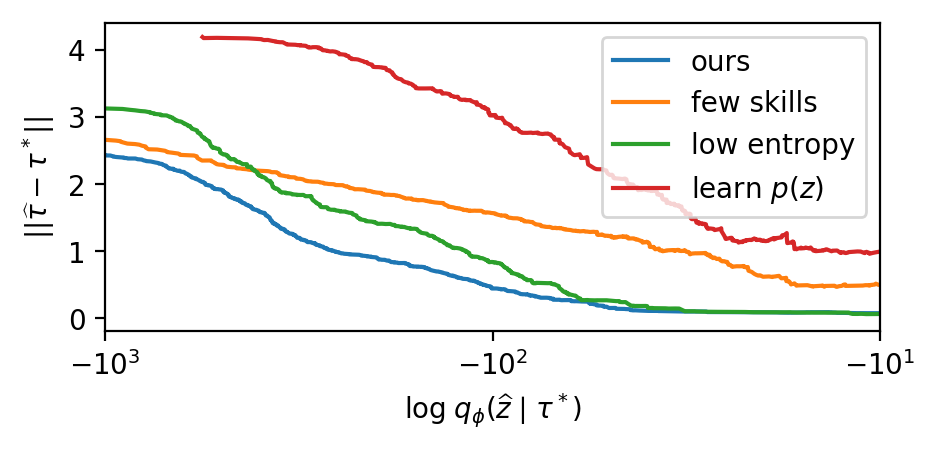}
    \caption{\textbf{Imitating an expert}: Across 600 imitation tasks, we find our method more closely matches the expert than all baselines.}
    \label{fig:imitation-graph}
\end{figure}

The ``expert'' trajectories are actually generated synthetically in these experiments, by running a different random seed of our algorithm. A different seed is used to ensure that the trajectories are not actually produced by any of the currently available skills. Of course, in practice, the expert trajectories might be provided by any other means, including a human.
For each expert trajectory, we retrieve the closest DIAYN skill $\hat{z}$ using Equation~\ref{eq:imitation}.
 Evaluating $q_{\phi}(\hat{z} \mid \tau^*)$ gives us an estimate of the probability that the imitation will match the expert (e.g., for a safety critical setting).
 This quantity is useful for predicting how accurately our method will imitate an expert before executing the imitation policy. In a safety critical setting, a user may avoid attempting tasks where this score is low.
We compare our method to three baselines. The ``low entropy'' baseline is a variant on our method with lower entropy regularization. The ``learned $p(z)$'' baseline learns the distribution over skills. Note that Variational Intrinsic Control~\citep{gregor2016variational} is a combination of the ``low entropy'' baseline and the ``learned $p(z)$'' baseline. Finally, the ``few skills'' baseline learns only 5 skills, whereas all other methods learn 50.
Figure~\ref{fig:imitation-graph} shows the results aggregated across 600 imitation tasks. The X-axis shows the discriminator score, our estimate for how well the imitation policy will match the expert. The Y-axis shows the true distance between the trajectories, as measured by L2 distance in state space.
For all methods, the distance between the expert and the imitation decreases as the discriminator's score increases, indicating that the discriminator's score is a good predictor of task performance.
Our method consistently achieves the lowest trajectory distance among all methods.
The ``low entropy'' baseline is slightly worse, motivating our decision to learn maximum entropy skills. When imitating tasks using the ``few skills'' baseline, the imitation trajectories are even further from the expert trajectory. This is expected -- by learning more skills, we obtain a better ``coverage'' over the space of skills. A ``learn $p(z)$'' baseline that learns the distribution over skills also performs poorly.
Recalling that \citet{gregor2016variational} is a combination of the ``low entropy'' baseline and the ``learn $p(z)$'' baseline, this plot provides evidence that using maximum entropy policies and fixing the distribution for $p(z)$ are two factors that enabled our method to scale to more complex tasks.

\end{document}